\newcommand{\bx}{\boldsymbol{x}}
\newcommand{\bz}{\boldsymbol{z}}
\newcommand{\bw}{\boldsymbol{\mathrm{w}}}
\newcommand{\bI}{\boldsymbol{\mathrm{I}}}
\definecolor{LightCyan}{rgb}{0.88,1,1}
\definecolor{LightYellow}{rgb}{1,1,0.88}
\definecolor{DarkYellow}{rgb}{0.8352941176470589,0.7137254901960784,0.0392156862745098}
\definecolor{Yellow}{rgb}{1,1,0.0784313725490196}
\definecolor{mydarkgreen}{RGB}{69, 153, 44}
\newcommand{\sign}{\mathrm{sign}}
\definecolor{green}{RGB}{0, 128, 0}
\definecolor{red}{RGB}{245, 60, 60}
\newcommand{\colorcellauroc}[1]{%
  \ifdim #1 pt < 0.5pt
    \cellcolor{red!100}
  \else
    \ifdim #1 pt < 0.51pt
      \cellcolor{red!100}
    \else
      \ifdim #1 pt < 0.52pt
        \cellcolor{red!20}
      \else
        \ifdim #1 pt < 0.55pt
          \cellcolor{red!10}
        \else
          \ifdim #1 pt < 0.6pt
            \cellcolor{green!10}
          \else
            \ifdim #1 pt < 0.7pt
              \cellcolor{green!20}
            \else
              \ifdim #1 pt < 0.8pt
                \cellcolor{green!40}
              \else
                \ifdim #1 pt < 0.9pt
                  \cellcolor{green!100}
                \else
                  \cellcolor{green!100}
                \fi
              \fi
            \fi
          \fi
        \fi
      \fi
    \fi
  \fi
  #1
}
\newcommand{\colorcellece}[1]{%
  \ifdim #1 pt < 0.02pt
    \cellcolor{green!45}
  \else
    \ifdim #1 pt < 0.05pt
      \cellcolor{green!40}
    \else
      \ifdim #1 pt < 0.07pt
        \cellcolor{green!35}
      \else
        \ifdim #1 pt < 0.10pt
          \cellcolor{green!30}
        \else
          \ifdim #1 pt < 0.15pt
            \cellcolor{green!25}
          \else
            \ifdim #1 pt < 0.20pt
              \cellcolor{green!20}
            \else
              \ifdim #1 pt < 0.25pt
                \cellcolor{red!50}
              \else
                \ifdim #1 pt < 0.30pt
                  \cellcolor{red!100}
                \else
                  \cellcolor{red!100}
                \fi
              \fi
            \fi
          \fi
        \fi
      \fi
    \fi
  \fi
  #1
}
\newcommand{\colorcellecetwo}[1]{%
  \ifdim #1 pt < 0.035pt
    \cellcolor{green!45}
  \else
    \ifdim #1 pt < 0.04pt
      \cellcolor{green!40}
    \else
      \ifdim #1 pt < 0.045pt
        \cellcolor{green!35}
      \else
        \ifdim #1 pt < 0.05pt
          \cellcolor{green!30}
        \else
          \ifdim #1 pt < 0.055pt
            \cellcolor{green!25}
          \else
            \ifdim #1 pt < 0.06pt
              \cellcolor{green!20}
            \else
              \ifdim #1 pt < 0.065pt
                \cellcolor{red!50}
              \else
                \ifdim #1 pt < 0.07pt
                  \cellcolor{red!100}
                \else
                  \cellcolor{red!100}
                \fi
              \fi
            \fi
          \fi
        \fi
      \fi
    \fi
  \fi
  #1
}
\newcommand{\colorcellecethree}[1]{%
  \ifdim #1 pt < 0.110pt
    \cellcolor{green!45}
  \else
    \ifdim #1 pt < 0.120pt
      \cellcolor{green!40}
    \else
      \ifdim #1 pt < 0.125pt
        \cellcolor{green!35}
      \else
        \ifdim #1 pt < 0.130pt
          \cellcolor{green!30}
        \else
          \ifdim #1 pt < 0.135pt
            \cellcolor{green!25}
          \else
            \ifdim #1 pt < 0.140pt
              \cellcolor{green!20}
            \else
              \ifdim #1 pt < 0.145pt
                \cellcolor{red!50}
              \else
                \ifdim #1 pt < 0.150pt
                  \cellcolor{red!100}
                \else
                  \cellcolor{red!100}
                \fi
              \fi
            \fi
          \fi
        \fi
      \fi
    \fi
  \fi
  #1
}
\newcommand{\colorcellecefour}[1]{%
  \ifdim #1 pt < 0.142pt
    \cellcolor{green!45}
  \else
    \ifdim #1 pt < 0.146pt
      \cellcolor{green!40}
    \else
      \ifdim #1 pt < 0.148pt
        \cellcolor{green!35}
      \else
        \ifdim #1 pt < 0.150pt
          \cellcolor{green!30}
        \else
          \ifdim #1 pt < 0.152pt
            \cellcolor{green!25}
          \else
            \ifdim #1 pt < 0.154pt
              \cellcolor{green!20}
            \else
              \ifdim #1 pt < 0.156pt
                \cellcolor{red!50}
              \else
                \ifdim #1 pt < 0.158pt
                  \cellcolor{red!100}
                \else
                  \cellcolor{red!100}
                \fi
              \fi
            \fi
          \fi
        \fi
      \fi
    \fi
  \fi
  #1
}
\newcommand{\colorcellecefive}[1]{%
  \ifdim #1 pt < 0.217pt
    \cellcolor{green!45}
  \else
    \ifdim #1 pt < 0.228pt
      \cellcolor{green!40}
    \else
      \ifdim #1 pt < 0.233pt
        \cellcolor{green!35}
      \else
        \ifdim #1 pt < 0.238pt
          \cellcolor{green!30}
        \else
          \ifdim #1 pt < 0.243pt
            \cellcolor{green!25}
          \else
            \ifdim #1 pt < 0.248pt
              \cellcolor{green!20}
            \else
              \ifdim #1 pt < 0.253pt
                \cellcolor{red!50}
              \else
                \ifdim #1 pt < 0.258pt
                  \cellcolor{red!100}
                \else
                  \cellcolor{red!100}
                \fi
              \fi
            \fi
          \fi
        \fi
      \fi
    \fi
  \fi
  #1
}
\newcommand{\colorcellecesix}[1]{%
  \ifdim #1 pt < 0.106pt
    \cellcolor{green!45}
  \else
    \ifdim #1 pt < 0.107pt
      \cellcolor{green!40}
    \else
      \ifdim #1 pt < 0.109pt
        \cellcolor{green!35}
      \else
        \ifdim #1 pt < 0.111pt
          \cellcolor{green!30}
        \else
          \ifdim #1 pt < 0.113pt
            \cellcolor{green!25}
          \else
            \ifdim #1 pt < 0.115pt
              \cellcolor{green!20}
            \else
              \ifdim #1 pt < 0.117pt
                \cellcolor{red!50}
              \else
                \ifdim #1 pt < 0.119pt
                  \cellcolor{red!100}
                \else
                  \cellcolor{red!100}
                \fi
              \fi
            \fi
          \fi
        \fi
      \fi
    \fi
  \fi
  #1
}
\newcommand{\colorcellbrierone}[1]{%
  \ifdim #1 pt < 0.4206pt
    \cellcolor{green!45}
  \else
    \ifdim #1 pt < 0.430pt
      \cellcolor{green!40}
    \else
      \ifdim #1 pt < 0.450pt
        \cellcolor{green!35}
      \else
        \ifdim #1 pt < 0.470pt
          \cellcolor{green!30}
        \else
          \ifdim #1 pt < 0.490pt
            \cellcolor{green!25}
          \else
            \ifdim #1 pt < 0.540pt
              \cellcolor{green!20}
            \else
              \ifdim #1 pt < 0.545pt
                \cellcolor{red!50}
              \else
                \ifdim #1 pt < 0.550pt
                  \cellcolor{red!100}
                \else
                  \cellcolor{red!100}
                \fi
              \fi
            \fi
          \fi
        \fi
      \fi
    \fi
  \fi
  #1
}
\newcommand{\colorcellbriertwo}[1]{%
  \ifdim #1 pt < 0.115pt
    \cellcolor{green!45}
  \else
    \ifdim #1 pt < 0.118pt
      \cellcolor{green!40}
    \else
      \ifdim #1 pt < 0.120pt
        \cellcolor{green!35}
      \else
        \ifdim #1 pt < 0.122pt
          \cellcolor{green!30}
        \else
          \ifdim #1 pt < 0.124pt
            \cellcolor{green!25}
          \else
            \ifdim #1 pt < 0.126pt
              \cellcolor{green!20}
            \else
              \ifdim #1 pt < 0.129pt
                \cellcolor{green!10}
              \else
                \ifdim #1 pt < 0.130pt
                  \cellcolor{red!100}
                \else
                  \cellcolor{red!100}
                \fi
              \fi
            \fi
          \fi
        \fi
      \fi
    \fi
  \fi
  #1
}
\newcommand{\colorcellbrierthree}[1]{%
  \ifdim #1 pt < 0.316pt
    \cellcolor{green!45}
  \else
    \ifdim #1 pt < 0.3195pt
      \cellcolor{green!40}
    \else
      \ifdim #1 pt < 0.3200pt
        \cellcolor{green!35}
      \else
        \ifdim #1 pt < 0.321pt
          \cellcolor{green!30}
        \else
          \ifdim #1 pt < 0.322pt
            \cellcolor{green!25}
          \else
            \ifdim #1 pt < 0.3230pt
              \cellcolor{green!20}
            \else
              \ifdim #1 pt < 0.324pt
                \cellcolor{red!50}
              \else
                \ifdim #1 pt < 0.325pt
                  \cellcolor{red!100}
                \else
                  \cellcolor{red!100}
                \fi
              \fi
            \fi
          \fi
        \fi
      \fi
    \fi
  \fi
  #1
}
\newcommand{\colorcellbrierfour}[1]{%
  \ifdim #1 pt < 0.477pt
    \cellcolor{green!45}
  \else
    \ifdim #1 pt < 0.485pt
      \cellcolor{green!40}
    \else
      \ifdim #1 pt < 0.490pt
        \cellcolor{green!35}
      \else
        \ifdim #1 pt < 0.495pt
          \cellcolor{green!30}
        \else
          \ifdim #1 pt < 0.500pt
            \cellcolor{green!25}
          \else
            \ifdim #1 pt < 0.505pt
              \cellcolor{green!20}
            \else
              \ifdim #1 pt < 0.510pt
                \cellcolor{red!50}
              \else
                \ifdim #1 pt < 0.514pt
                  \cellcolor{red!100}
                \else
                  \cellcolor{red!100}
                \fi
              \fi
            \fi
          \fi
        \fi
      \fi
    \fi
  \fi
  #1
}
\theoremstyle{plain}
\newtheorem{theorem}{Theorem}[section]
\newtheorem{proposition}[theorem]{Proposition}
\theoremstyle{definition}
\theoremstyle{remark}
\icmltitlerunning{Just rotate it! Uncertainty estimation in closed-source models via multiple queries}
\begin{document}

\twocolumn[
\icmltitle{Just rotate it! Uncertainty estimation\\ in closed-source models via multiple queries
}



\icmlsetsymbol{equal}{*}

\begin{icmlauthorlist}
\icmlauthor{Konstantinos Pitas}{yyy}
\icmlauthor{Julyan Arbel}{yyy}
\end{icmlauthorlist}

\icmlaffiliation{yyy}{INRIA Grenoble Rhône-Alpes, France}

\icmlcorrespondingauthor{Julyan Arbel}{julyan.arbel@inria.fr}

\icmlkeywords{Machine Learning, ICML}

\vskip 0.3in
]



\printAffiliationsAndNotice{\icmlEqualContribution} 

\begin{abstract}
We propose a simple and effective method to estimate the uncertainty of closed-source deep neural network image classification models. Given a base image, our method creates multiple transformed versions and uses them to query the top-1 prediction of the closed-source model. We demonstrate significant improvements in calibration of uncertainty estimates compared to the naive baseline of assigning 100\% confidence to all predictions. While we initially explore Gaussian perturbations, our empirical findings indicate that natural transformations, such as rotations and elastic deformations, yield even better-calibrated predictions. 
Furthermore, through empirical results and a straightforward theoretical analysis, we elucidate the reasons behind the superior performance of natural transformations over Gaussian noise. Leveraging these insights, we propose a transfer learning approach that further improves our calibration results.
\end{abstract}

\section{Introduction}
In recent times, there has been a growing trend of monetizing and distributing deep learning models as closed-source applications or API-accessible websites \citep{team2023gemini,achiam2023gpt}. This shift has significantly amplified their impact in real-world scenarios, transforming them into practical tools for everyday use by diverse audiences. Simultaneously, the distribution of deep neural networks in a closed-source manner poses a challenge, as these models may lack the provision of uncertainty estimates crucial for certain applications and user needs \citep{abdar2021review,arbel2023primer}.

We narrow our focus to the domain of image classification, where given an image $\bx$ and a classifier $f: \mathbb{R}^d \rightarrow \mathcal{Y}$, our interest extends beyond identifying the most probable (top-1) class $f(\bx) = A$. We also seek a confidence level $p(A\vert \bx , f) \in [0,1]$, which quantifies the model's uncertainty regarding its prediction. In the absence of additional information and after a single query using the base image $\bx$, the naive baseline for closed-source models is to completely trust the model's prediction, and assign $p(A\vert \bx , f) = 1$.

In the broader context of uncertainty estimation for deep learning image classification models, existing methods often rely on accessing the post-softmax categorical distribution over classes \citep{guo2017calibration,lakshminarayanan2016simple,blundell2015weight,maddox2019simple,wenzel2020good}. However, such approaches are unsuitable for our setting, precisely due to the non-availability of the post-softmax categorical distribution. 

We thus propose a method to estimate the uncertainty of closed-source models by querying multiple times their top-1 prediction. Specifically we use transformed versions of $\bx$, generated through a randomized transformation function $\mathcal{T}: \mathbb{R}^d \rightarrow \mathbb{R}^d$ and estimate the fraction of times we observe class $A$ under the perturbation distribution $\mathcal{T}(\bx)\sim \tau$, denoted by $p_A$. At first, it might not be clear how to obtain a confidence estimate as a function of $p_A$. We therefore first analyze the simple case of adding Gaussian noise to our base image and link $p_A$ through a non-linear function to the categorical distribution of the model (we illustrate the approach in Figure \ref{fig:main}). The categorical distribution is a baseline measure of uncertainty for white-box deep neural networks \citep{bishop2006pattern,guo2017calibration}, and thus serves as a first justification of our approach.

\begin{figure*}[t!]
    \centering

    \begin{subfigure}{0.8\textwidth}
        \centering
        \includegraphics[width=\linewidth]{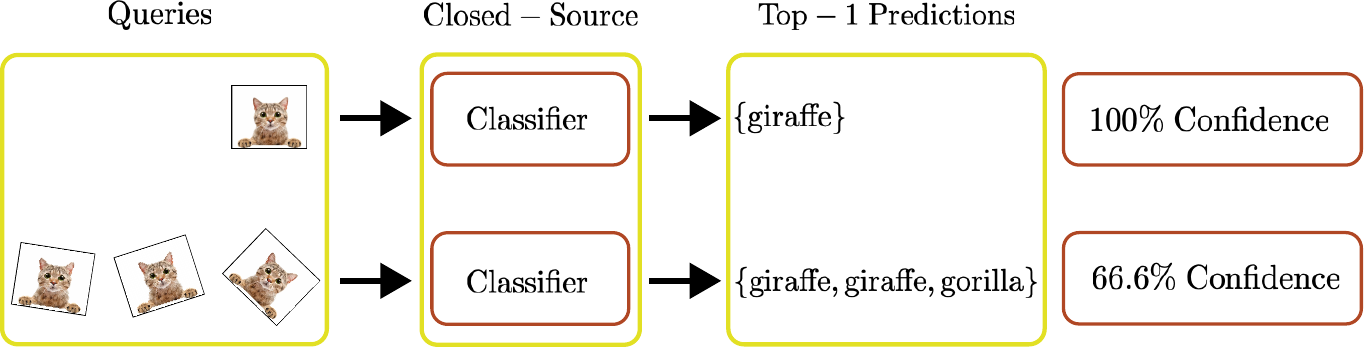}
        \label{fig:subfig1}
    \end{subfigure}

    \caption{\textbf{Rotational insights for multiple queries.} Top row: Querying a closed-source image classification model only once with a base image may yield an incorrect top-1 prediction. Despite the absence of additional information, the naive baseline is to assign $100\%$ confidence to this singular prediction. Bottom row: Querying the model multiple times with augmented versions of the base image produces the  $\{\mathrm{giraffe}\}$ class twice and the $\{\mathrm{gorilla}\}$ class once. This is roughly equivalent to $66.6\%$ confidence. This observation should serve as an alert to a potential error, even when the true label is unknown.}
    \label{fig:main}
\end{figure*}

We perform an in-depth analysis on CIFAR-10, CIFAR-100 \citep{krizhevsky2009learning}, Imagenet \citep{deng2009imagenet}, and widely-used architectures, demonstrating that test-time augmentation yields well-calibrated predictions according to standard calibration metrics, including the ECE \cite{naeini2015obtaining},  Brier score \cite{murphy1973new} and AUROC \citep{murphy2012machine}. Remarkably, in the case of Imagenet, even querying the model \emph{only twice} for each base image $\bx$ leads to enhanced calibration of predictions.

In evaluating the transformation function $\mathcal{T}$, we contrast Gaussian perturbations with natural transformations such as rotations, elastic deformations, and affine transforms. Our findings reveal that natural transformations exhibit superior performance compared to Gaussian perturbations. This holds particular significance, considering that natural transformations can be implemented without the need for coding--simply by rotating and translating the measurement device, e.g. a smartphone.

We then present a theoretical analysis that identifies two sources of error when linking $p_A$ to the categorical distribution. Firstly, considering a perturbation distribution $\mathcal{T}(\bx)\sim \tau$ at the input, it is essential for the noise distribution in the latent space $\eta$ \emph{just before the final classification layer} to be approximately consistent across different samples $\bx$. Secondly, we note that the optimal selection for the non-linear transformation between $p_A$ and the categorical distribution is contingent on $\eta$. An inadequate model of $\eta$ leads to suboptimal calibration in predictions.

Motivated by our previous theoretical analysis, we find that the suboptimal performance of the Gaussian perturbations compared to natural transformations is due to our inadequate model for $\eta$. We therefore consider a transfer learning scenario where we learn $\eta$ and the non-linear transformation between $p_A$ and $p(A\vert \bx , f)$ using CIFAR-10 and apply it on CIFAR-100. We get significant gains in calibration over our default non-linear transformation. In particular, by combining our learned non-linear function with Gaussian perturbations, we match or outperform the best natural transformations. 

Our analysis prescribes that practitioners first test their transformation function $\mathcal{T}$ of choice on open-source models and then use $\mathcal{T}$ and the learned non-linear function between $p_A$ and the categorical distribution on closed-source models.


In particular, our main contributions are as follows:
\begin{itemize}
\item We establish that multiple queries with augmented versions of an input $\bx$ can estimate the uncertainty of a closed-source model, and achieve up to $\sim30\%$ improvement in ECE, and $\sim40\%$ improvement in AUROC.
\item We show that natural transformations outperform Gaussian perturbations in uncertainty estimation by up to $\sim15\%$ in ECE and $\sim30\%$ in AUROC.
\item We analyze theoretically and empirically why natural transformations outperform Gaussian perturbations and find that the cause is an inadequate model of the latent noise $\eta$.
\item Given $\mathcal{T}$, we propose to learn $\eta$ using open-source models and then apply the $(\mathcal{T},\eta)$ combination on closed-source models. This leads to significant improvements when $\mathcal{T}$ is equivalent to Gaussian perturbations.
\end{itemize}

\section{Categorical distribution via multiple queries}

Let $f$ be a neural network classifier and $\bx$ be an input image. Assume that if we query using the base image $\bx$ the neural network predicts the top-1 class $f(\bx)=A$. Let $p(A\vert \bx , f)$ be the probability of class $A$ in the categorical distribution of $f$ given $\bx$ after applying the softmax function to the final network logits. 
We consider Gaussian perturbations to the input such that $\mathcal{T}(\bx)=\bx+\epsilon_{\tau}$ where $\epsilon_{\tau} \sim \mathcal{N}(0, \sigma_*^2\bI)$, and want to relate the $p(A\vert \bx , f)$, to the probability of sampling class $A$ under perturbations $\mathcal{T}(\bx)=\bx+\epsilon_{\tau}$, namely \begin{equation*}
p_A(\bx) = \mathbb{P}_{\epsilon_{\tau}}\left( f(\bx+\epsilon_{\tau}) = A\right).
\end{equation*}

We consider a decomposition of the neural network $f$ into $f = g \circ h$ where $h : \mathbb{R}^{d} \rightarrow \mathbb{R}^{d_{\eta}}$ is an encoder such that $\bz=h(\bx)$ is the latent representation of $\bx$ and $g : \mathbb{R}^{d_{\eta}} \rightarrow \mathcal{Y}$ is the final classification layer. We can show the following proposition for binary classification.

\begin{proposition}\label{main_theorem}
Let $f = g \circ h$ be a neural network used for binary classification, where $h : \mathbb{R}^{d} \rightarrow \mathbb{R}^{d_{\eta}}$ is an encoder and $g : \mathbb{R}^{d_{\eta}} \rightarrow \mathcal{Y}$ is the final classification layer, $\bx$ is an input image and $\epsilon_{\tau} \sim \mathcal{N}(0, \sigma_*^2\bI)$ is Gaussian noise added to the input. Let $\mathcal{J}_h$ be the Jacobian of $h$ at $\bx$,  and let $\bw$ be the separating hyperplane between class $A$ and class $B$.
Then the softmax probability outputted by $f$ given $\bx$, $p(A\vert \bx , f)$, can be related to the probability of sampling the original class $f(\bx)=A \in \mathcal{Y}$ under perturbations  $\epsilon_{\tau}$, $p_A(\bx)$,  by
\begin{equation}\label{confidence_impractical}
    p(A\vert \bx , f) \approx \frac{1}{1+e^{-\sigma_*\Vert \mathcal{J}_h(\bx)\bw \Vert \Phi^{-1}(p_A(\bx))}}.
\end{equation}
\end{proposition}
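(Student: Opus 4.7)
The plan is to use three standard ingredients in sequence: the sigmoid form of the binary-classification output, a first-order Taylor expansion of the encoder around $\bx$, and the fact that a linear functional of Gaussian noise is itself Gaussian. Together these reduce $p_A(\bx)$ to a normal CDF that can be inverted and substituted back into the sigmoid.

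First, since $f = g \circ h$ performs binary classification and $\bw$ is the separating hyperplane in the latent space, after a standard reparametrisation the softmax reduces to a sigmoid, so that $p(A\vert \bx,f) = \sigma(\bw^\top \bz)$ with $\bz = h(\bx)$ and $\sigma(t) = 1/(1+e^{-t})$. Likewise, $f(\bx+\epsilon_\tau) = A$ is equivalent to $\bw^\top h(\bx+\epsilon_\tau) > 0$, so
\begin{equation*}
p_A(\bx) = \mathbb{P}_{\epsilon_\tau}\bigl(\bw^\top h(\bx+\epsilon_\tau) > 0\bigr).
\end{equation*}

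Next I would linearise $h$ around $\bx$, writing $h(\bx+\epsilon_\tau) \approx h(\bx) + \mathcal{J}_h(\bx)\epsilon_\tau$; this is the only approximation step and underlies the $\approx$ in the statement. Plugging this in gives
\begin{equation*}
\bw^\top h(\bx+\epsilon_\tau) \approx \bw^\top \bz + \bw^\top \mathcal{J}_h(\bx)\epsilon_\tau.
\end{equation*}
Because $\epsilon_\tau \sim \mathcal{N}(0,\sigma_*^2\bI)$, the second summand is a scalar Gaussian with mean zero and standard deviation $\sigma_* \Vert \mathcal{J}_h(\bx)\bw \Vert$ (using the paper's notational convention for the Jacobian–vector product). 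Hence
\begin{equation*}
p_A(\bx) \;\approx\; \Phi\!\left(\frac{\bw^\top \bz}{\sigma_* \Vert \mathcal{J}_h(\bx)\bw \Vert}\right),
\end{equation*}
where $\Phi$ is the standard normal CDF.

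Finally I would invert this relation to recover the latent logit: $\bw^\top \bz \approx \sigma_* \Vert \mathcal{J}_h(\bx)\bw \Vert \,\Phi^{-1}(p_A(\bx))$. Substituting into $p(A\vert \bx,f) = \sigma(\bw^\top \bz)$ yields the claimed expression. The main step to be careful about is the linearisation: the identity would be exact for an affine $h$, and the quality of the approximation is controlled by how small $\sigma_*$ is relative to the scale on which the second derivatives of $h$ vary. I would not expect any serious obstacle beyond making this caveat explicit, together with the observation that for binary softmax the reduction to a single sigmoid with one separating direction $\bw$ is lossless.
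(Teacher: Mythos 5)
Your proposal matches the paper's own proof essentially step for step: linearise $h$ around $\bx$ so the latent perturbation becomes Gaussian with covariance $\sigma_*^2\mathcal{J}_h^{\top}\mathcal{J}_h$, reduce the two-class softmax to a sigmoid of a single separating direction, write $p_A(\bx)$ as a normal CDF of the (scaled) margin, and invert $\Phi$ to substitute back. The only cosmetic difference is that you absorb the final-layer bias $b$ into the logit rather than carrying it explicitly as the paper does, which does not affect the argument.
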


\begin{proof}[Proof Sketch]
We prove the simplified case where in the perturbations \emph{in latent space} are Gaussian $\bz+\epsilon_{\eta}\sim \mathcal{N}(\bz, \bI)$ and the separating hyperplane between classes $A$ and $B$ is $\bw^{\top}\bz+b$ with $\Vert \bw \Vert=1$. 

Since $p_A(\bx)$ is the fraction of times we sample $A$ under $\mathcal{T}$, to sample $A$ then $\bz+\epsilon_{\eta}$ \emph{should not be displaced from $\bz$ more than the distance to the decision boundary} (otherwise $\bz+\epsilon_{\eta}$ crosses the boundary and we predict $B$). The distance to the decision boundary is $\frac{\bw^{\top}(\bz+\epsilon_{\eta})+b}{\Vert \bw \Vert}=Y\sim \mathcal{N}(\bw^{\top}\bz+b,1)$. Therefore, taking into account that $Y$ is Gaussian, $p_A(\bx) = \mathbb{P}\left(Z>-\bw^{\top}\bz-b\right)=1-\Phi(-\bw^{\top}\bz-b) \iff \bw^{\top}\bz+b = \Phi^{-1}(p_A(\bx))$, where $\Phi^{-1}$ is the inverse cumulative of the Gaussian. 

By noting that in the binary classification case given a top-1 class prediction $A$, the softmax probability of this class is
\begin{align}
    &p(A\vert \bx , f) = \frac{e^{\bw_{A}^{\top}\bz+b_A}}{e^{\bw_{A}^{\top}\bz+b_A}+e^{\bw_{B}^{\top}\bz+b_B}}\nonumber\\
    &\,\,=\frac{1}{1+e^{-(\bw_{A}+b_A-\bw_{B}-b_B)^{\top}\bz}}=\frac{1}{1+e^{-(\bw^{\top}\bz+b)}},\label{eq:folk}
\end{align}
we get our result.
\end{proof}

\begin{table*}[!t]
    \caption{Results for the \textbf{CIFAR-10} and  \textbf{CIFAR-100} datasets and different architectures. Testing with augmentation always improves calibration, compared to the naive baseline of assigning 100\% confidence to all predictions. The best augmentation method is random rotations. $S$ denotes the number of samples from $\mathcal{T}(\bx)$. ``Var" denotes the variability measure $\sup_{x} \vert F_{n,Q_{2.5}}(x)-F_{n,Q_{97.5}}(x) \vert$. KS denotes the Kolmogorov--Smirnov test statistic $\sup_x \vert F^{-1}(x)-a\Phi^{-1}(x) \vert$ for the mean empirical cumulative $F_{n,\mathrm{mean}}(x)$ (the mean is with respect to the different $\bx$) and $\Phi(x/a)$ for a grid over $a$.}
    \label{table-res-cifar}
\centering
\begin{tabular}{cccccccccc}
\toprule
Dataset & Model & Augmentor & $S$ & Acc $\uparrow$ & ECE $\downarrow$ & AUROC $\uparrow$ & Brier $\downarrow$  & Var $\downarrow$ & KS $\downarrow$  \\
\hline
\multirow{5}{*}{CIFAR-10} & \multirow{5}{*}{MobileNet} & Naive & 1 & 0.940 & \colorcellecetwo{0.060}  & \colorcellauroc{0.500} & \colorcellbriertwo{0.119}  & - & -\\
 && Gaussian & 50 & 0.938 & \colorcellecetwo{0.053} & \colorcellauroc{0.603} & \textbf{\colorcellbriertwo{0.115}}  & 0.679 & 0.01\\
 && Affine & 50 & 0.933 & \colorcellecetwo{0.048} & \colorcellauroc{0.795} & \colorcellbriertwo{0.126}  & 0.735 & 0.116\\
 && Elastic & 50 & 0.913 & \colorcellecetwo{0.058} & \textbf{\colorcellauroc{0.898}} & \colorcellbriertwo{0.15}   & 0.709 & 0.147\\
 && Rotation & 50 & 0.932 & \textbf{\colorcellecetwo{0.035}} & \colorcellauroc{0.875} & \colorcellbriertwo{0.118}  & 0.743 & 0.202\\
\hline
\multirow{5}{*}{CIFAR-10} & \multirow{5}{*}{ResNet18} & Naive & 1 & 0.932 & \colorcellecetwo{0.068} & \colorcellauroc{0.500} & \colorcellbriertwo{0.137}  & - & -\\
 && Gaussian & 50 & 0.925 & \colorcellecetwo{0.044} & \colorcellauroc{0.698} & \colorcellbriertwo{0.132}  & 0.79 & 0.077\\
 && Affine & 50 & 0.924 & \colorcellecetwo{0.061} & \colorcellauroc{0.763} & \colorcellbriertwo{0.147}  & 0.604 & 0.05\\
 && Elastic & 50 & 0.909 & \colorcellecetwo{0.042} & \textbf{\colorcellauroc{0.903}} & \colorcellbriertwo{0.151}  & 0.59 & 0.075\\
 && Rotation & 50 & 0.924 & \textbf{\colorcellecetwo{0.027}} & \colorcellauroc{0.877} & \textbf{\colorcellbriertwo{0.128}}  & 0.69 & 0.072\\
\hline
\multirow{5}{*}{CIFAR-10} & \multirow{5}{*}{ResNet50} & Naive & 1 & 0.937 & \colorcellecetwo{0.063} & \colorcellauroc{0.500} & \colorcellbriertwo{0.126}  & - & -\\
 && Gaussian & 50 & 0.934 & \colorcellecetwo{0.046} & \colorcellauroc{0.651} & \colorcellbriertwo{0.120}   & 0.764 & 0.06\\
 && Affine & 50 & 0.928 & \colorcellecetwo{0.056} & \colorcellauroc{0.788} & \colorcellbriertwo{0.137}  & 0.667 & 0.11\\
 && Elastic & 50 & 0.919 & \colorcellecetwo{0.044} & \textbf{\colorcellauroc{0.901}} & \colorcellbriertwo{0.137}  & 0.675 & 0.148\\
 && Rotation & 50 & 0.932 & \textbf{\colorcellecetwo{0.028}} & \colorcellauroc{0.875} & \textbf{\colorcellbriertwo{0.117}}  & 0.712 & 0.083\\
 \hline
 \hline
\multirow{5}{*}{CIFAR-100} & \multirow{5}{*}{MobileNet} & Naive & 1 & 0.759 & \colorcellece{0.241} & \colorcellauroc{0.500} & \colorcellbrierone{0.482}  & - & -\\
 && Gaussian & 10 & 0.734 & \colorcellece{0.165} & \colorcellauroc{0.706} & \colorcellbrierone{0.450}  & 0.597 & 0.102\\
 && Affine & 10 & 0.721 & \colorcellece{0.075} & \colorcellauroc{0.726} & \colorcellbrierone{0.459}  & 0.273 & 0.050\\
 && Elastic & 10 & 0.687 & \colorcellece{0.050} & \textbf{\colorcellauroc{0.821}} & \colorcellbrierone{0.494}  & 0.302 & 0.046\\
 && Rotation & 10 & 0.731 & \textbf{\colorcellece{0.047}} & \colorcellauroc{0.811} & \textbf{\colorcellbrierone{0.427}}  & 0.392 & 0.031\\
\hline
\multirow{5}{*}{CIFAR-100} & \multirow{5}{*}{ResNet20} & Naive & 1 & 0.684 & \colorcellece{0.316} & \colorcellauroc{0.500} & \colorcellbrierone{0.632}  & - & -\\ 
 && Gaussian & 10 & 0.639 & \colorcellece{0.205} & \colorcellauroc{0.715} & \colorcellbrierone{0.590}  & 0.581 & 0.087\\
 && Affine & 10 & 0.657 & \colorcellece{0.038} & \colorcellauroc{0.680} & \colorcellbrierone{0.549}  & 0.264 & 0.050\\
 && Elastic & 10 & 0.625 & \textbf{\colorcellece{0.023}} & \textbf{\colorcellauroc{0.802}} & \colorcellbrierone{0.572}  & 0.295 & 0.050\\
 && Rotation & 10 & 0.659 & \colorcellece{0.047} & \colorcellauroc{0.778} & \textbf{\colorcellbrierone{0.533}}  & 0.389 & 0.031\\
\hline
\multirow{5}{*}{CIFAR-100} & \multirow{5}{*}{ResNet56} & Naive & 1 & 0.724 & \colorcellece{0.276} & \colorcellauroc{0.500} & \colorcellbrierone{0.553}  & - & -\\
 && Gaussian & 10 & 0.653 & \colorcellece{0.169} & \colorcellauroc{0.747} & \colorcellbrierone{0.552}  & 0.601 & 0.109\\
 && Affine & 10 & 0.677 & \colorcellece{0.062} & \colorcellauroc{0.710} & \colorcellbrierone{0.518}  & 0.259 & 0.050\\
 && Elastic & 10 & 0.663 & \textbf{\colorcellece{0.037}} & \textbf{\colorcellauroc{0.822}} & \colorcellbrierone{0.522}  & 0.236 & 0.044\\
 && Rotation & 10 & 0.693 & \colorcellece{0.044} & \colorcellauroc{0.801} & \textbf{\colorcellbrierone{0.481}}  & 0.392 & 0.030\\
\bottomrule
\end{tabular}
\end{table*}

The term $\sigma_*$ in \eqref{confidence_impractical} comes from perturbing the input with a Gaussian with variance $\sigma_*^2$. The term $ \mathcal{J}_h(\bx)$ results from modeling the Gaussian noise in input space as Gaussian noise in latent space, and the term $\bw$ results from considering $\bw$ with a general norm $\Vert\bw\Vert$. The full proof can be found in Appendix~\ref{appendix_additional_proofs}. The probability $p(A\vert \bx , f)$ that we estimate through $p_A(\bx)$ is a baseline measure of uncertainty for white-box deep neural networks \citep{bishop2006pattern,guo2017calibration}, and forms the basis for more complicated uncertainty estimate techniques such as temperature scaling \citep{guo2017calibration}, Bayesian neural networks \citep{blundell2015weight,arbel2023primer}, and deep ensembles \citep{lakshminarayanan2016simple}. As such Proposition \ref{main_theorem} serves as a first justification for using $p_A(\bx)$ as an estimate of uncertainty.

However, in practice, Equation \eqref{confidence_impractical} is not directly applicable as an estimate of uncertainty for closed-source classification models. In particular, the relationship approximating $p(A\vert \bx , f)$ depends on $\Vert \mathcal{J}_h(\bx)\bw \Vert$, which might be impossible to estimate using only queries of the top-1 class, contrary to $p_A(\bx)$. Therefore we derive the following practical model for uncertainty estimation.

\paragraph{Gaussian model.}
We will model the probability of the top-1 class as 
\begin{equation}\label{practical_method_1}
    p(A\vert \bx , f) = \frac{1}{1+e^{-a \Phi^{-1}(p_A(\bx))}}, \;\; \forall \bx,
\end{equation}

where $A=f(\bx)$ is the class predicted by $f$ on the base sample $\bx$, $p_A(\bx) = \mathbb{P}_{\mathcal{T}(\bx)}\left( f(\mathcal{T}(\bx)) = A\right)$, and $a$ is a hyperparameter common to all $\bx$ which we will estimate using a validation set. This model has the advantage of being simple to use in practice. Given a closed-source architecture and any transform in input space $\mathcal{T}(\bx)$ we simply need to learn the hyperparameter $a$ using a validation set to make useful uncertainty estimates on new data. 

In practice, we are interested in \textit{multiclass} classification problems. For such cases, it is reasonable to assume that the binary model holds approximately between the most probable class $A$ and the second most probable class $B$. To simplify our calculations and avoid estimating the second most probable class, let $C \in A^{c}$ be the set of all classes that are not class $A$. In multiclass classification problems, we compute $p_A(\bx)$, then $p(A\vert \bx , f)$ using Equation \eqref{practical_method_1}, and finally set $p(C\vert \bx , f)=(1-p(A\vert \bx , f))/\vert A^{c} \vert$. Note that the choice of $p(C\vert \bx , f)$ does not affect most calibration metrics, as they consider the confidence in the top-1 prediction.

\begin{table*}[t!]
    \caption{Results for the \textbf{Imagenet} dataset and different architectures. Testing with augmentation always improves calibration, compared to the naive baseline of assigning 100\% confidence to all predictions. The best augmentation method is random rotations. $S$ denotes the number of samples from $\mathcal{T}(\bx)$. ``Var" denotes the variability measure $\sup_{x} \vert F_{n,Q_{2.5}}(x)-F_{n,Q_{97.5}}(x) \vert$. KS denotes the Kolmogorov--Smirnov test statistic $\sup_x \vert F^{-1}(x)-a\Phi^{-1}(x) \vert$ for the mean empirical cumulative $F_{n,\mathrm{mean}}(x)$ (the mean is with respect to the different $\bx$) and $\Phi(x/a)$ for a grid over $a$.}
    \label{table-res-imagenet}
    \centering
\begin{tabular}{ccccccccc}
    \toprule
    Model & Augmentor & $S$ & Acc $\uparrow$ & ECE $\downarrow$ & AUROC $\uparrow$ & Brier $\downarrow$  & Var $\downarrow$ & KS $\downarrow$  \\
   \hline
\multirow{5}{*}{EfficientNet} & Naive & 1 & 0.842 & \colorcellecefour{0.158} & \colorcellauroc{0.500} & \textbf{\colorcellbrierthree{0.316}}  & - & -\\
 & Gaussian & 2 & 0.841 & \colorcellecefour{0.157} & \colorcellauroc{0.508} & \colorcellbrierthree{0.316}  & 0.641 & 0.042\\
 & Elastic & 2 & 0.835 & \colorcellecefour{0.148} & \colorcellauroc{0.564} & \colorcellbrierthree{0.316}  & 0.48 & 0.04\\
 & Affine & 2 & 0.822 & \textbf{\colorcellecefour{0.142}} & \textbf{\colorcellauroc{0.620}} & \colorcellbrierthree{0.325}  & 0.484 & 0.03\\
 & Rotation & 2 & 0.827 & \colorcellecefour{0.144} & \colorcellauroc{0.606} & \colorcellbrierthree{0.320}  & 0.525 & 0.029\\
\hline
\multirow{5}{*}{MobileNet} & Naive & 1 & 0.743 & \colorcellecefive{0.257} & \colorcellauroc{0.500} & \colorcellbrierfour{0.514}  & - & -\\
 & Gaussian & 2 & 0.74 & \colorcellecefive{0.248} & \colorcellauroc{0.529} & \colorcellbrierfour{0.508}  & 0.573 & 0.049\\
 & Elastic & 2 & 0.742 & \textbf{\colorcellecefive{0.217}}  & \colorcellauroc{0.601} & \textbf{\colorcellbrierfour{0.477}} & 0.401 & 0.052\\
 & Affine & 2 & 0.669 & \colorcellecefive{0.222} & \textbf{\colorcellauroc{0.697}} & \colorcellbrierfour{0.557}   & 0.384 & 0.046\\
 & Rotation & 2 & 0.707 & \colorcellecefive{0.219} & \colorcellauroc{0.659} & \colorcellbrierfour{0.514}  & 0.446 & 0.04\\
\hline
\multirow{5}{*}{ViTB16} & Naive & 1 & 0.853 & \colorcellecethree{0.147} & \colorcellauroc{0.500} & \colorcellbrierthree{0.295}  & - & -\\
 & Gaussian & 2 & 0.853 & \colorcellecethree{0.143} & \colorcellauroc{0.518} & \colorcellbrierthree{0.291}  & 0.603 & 0.034\\
 & Elastic & 2 & 0.846 & \colorcellecethree{0.137} & \colorcellauroc{0.572} & \colorcellbrierthree{0.292}  & 0.413 & 0.045\\
 & Affine & 2 & 0.818 & \colorcellecethree{0.135} & \textbf{\colorcellauroc{0.651}} & \colorcellbrierthree{0.325}  & 0.412 & 0.037\\
 & Rotation & 2 & 0.843 & \textbf{\colorcellecethree{0.128}} & \colorcellauroc{0.608} & \textbf{\colorcellbrierthree{0.290}}  & 0.462 & 0.034\\
\hline
\multirow{5}{*}{ViTL16} & Naive & 1 & 0.881 & \colorcellecesix{0.119} & \colorcellauroc{0.500} & \colorcellecefive{0.238}  & - & -\\
 & Gaussian & 2 & 0.881 & \colorcellecesix{0.117} & \colorcellauroc{0.508} & \colorcellecefive{0.236}  & 0.584 & 0.046\\
 & Elastic & 2 & 0.88 & \colorcellecesix{0.115} & \colorcellauroc{0.526} & \colorcellecefive{0.236} & 0.411 & 0.042\\
 & Affine & 2 & 0.872 & \textbf{\colorcellecesix{0.106}} & \textbf{\colorcellauroc{0.602}} & \colorcellecefive{0.239}  & 0.402 & 0.032\\
 & Rotation & 2 & 0.876 & \colorcellecesix{0.107} & \colorcellauroc{0.585} & \textbf{\colorcellecefive{0.235}}  & 0.475 & 0.030\\
\bottomrule
\end{tabular}
\end{table*}

\section{Gaussian model experiments}
We conducted experiments of increasing difficulty on the CIFAR-100, CIFAR-10, and Imagenet datasets. In the absence of additional information, the Gaussian model of Equation \eqref{practical_method_1} is used, not only with Gaussian, but also with \emph{natural} transformations such as elastic, rotation, and affine transformations (combined rotations, translations, and scaling). These transforms have the advantage of being applicable without coding by simply rotating and translating the measurement device. We provide details on the hyperparameter ranges for these transformations Appendix~\ref{appendix_setup_and_hyperparameter_ranges}. 

We test a variety of ResNet \citep{he2016deep}, MobileNet \cite{howard2017mobilenets}, Vision Transformer (ViT) \citep{dosovitskiy2020image}, and EfficientNet \citep{tan2019efficientnet} architectures. As a naive baseline, we query using the base image $\bx$ and assign $p(A\vert \bx , f)=1$ confidence to all predictions. To estimate $p_A(\bx)$ we use $S$ i.i.d. draws from $\mathcal{T}(\bx)$. For CIFAR-10 and CIFAR-100 we use a subset $m=1000$ of the test set to estimate $a$, and use the remaining $n=9000$ test samples to report test metrics. For Imagenet we report values on the publicly available validation set, which we subdivide into $m=5000$ and $n=45000$ respectively. We search $a\in\{0.001,0.005,0.01,0.05,0.1,0.5,1,10,100\}$, and perform the hyperparameter tuning over $a$ and the augmentation hyperparameters using grid search.

We measure calibration using the ECE, and the AUROC. The ECE and the AUROC however are not proper scoring rules \citep{nixon2019measuring} and thus clear improvements can only be validated for stable values of accuracy. If gains in ECE and AUROC are small or the accuracy changes significantly, then the Brier score \citep[which is a proper scoring rule, see ][]{murphy1973new} is more appropriate.

\begin{figure}[t!]
    \centering

    \begin{subfigure}{0.45\textwidth}
        \centering
        \includegraphics[width=\linewidth]{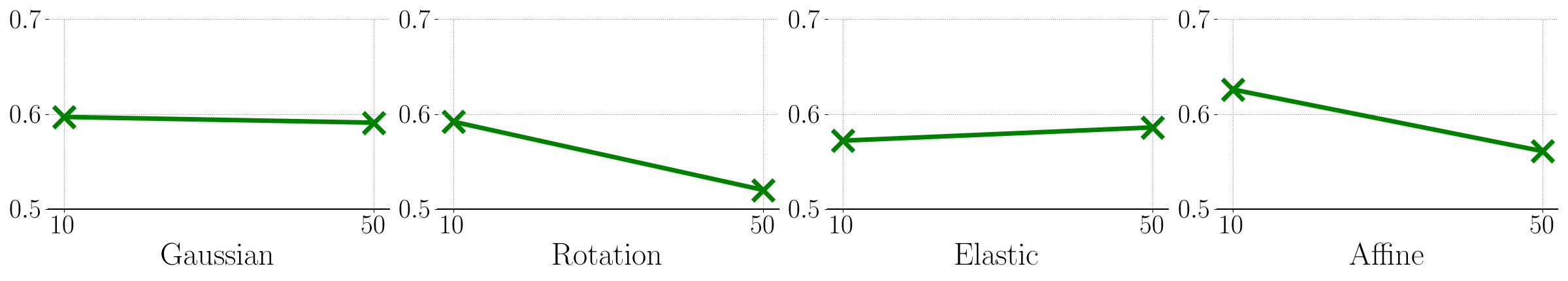}
        \caption{CIFAR-100, ResNet20}
        \label{fig:subfig12}
    \end{subfigure}

    \begin{subfigure}{0.45\textwidth}
        \centering
        \includegraphics[width=\linewidth]{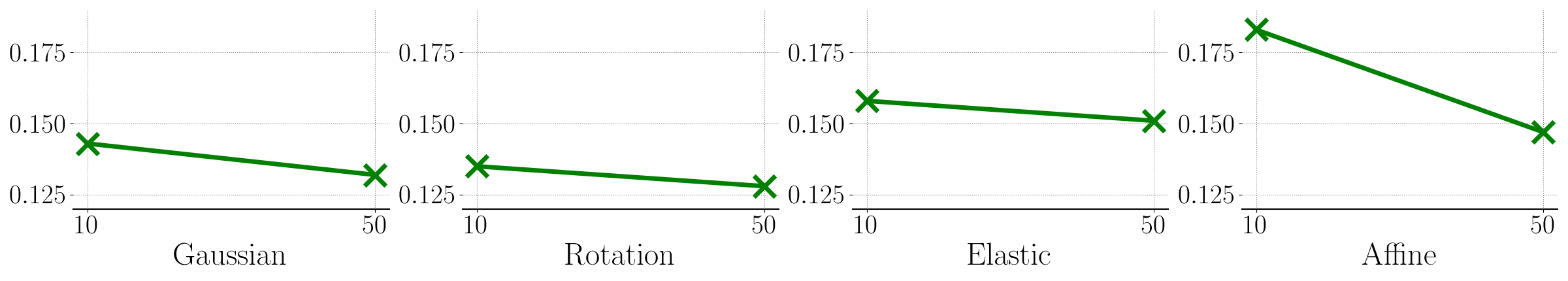}
        \caption{CIFAR-10, ResNet18}
        \label{fig:subfig22}
    \end{subfigure}

    \caption{\textbf{Brier score with varying number of samples.} Increasing the number of samples $S$ from $10$ to $50$ improves the Brier score consistently across datasets, architectures, and transformations.}
    \label{fig:main2}
\end{figure}

\begin{figure*}[t!]
    \centering
        \centering
        \includegraphics[width=.7\textwidth]{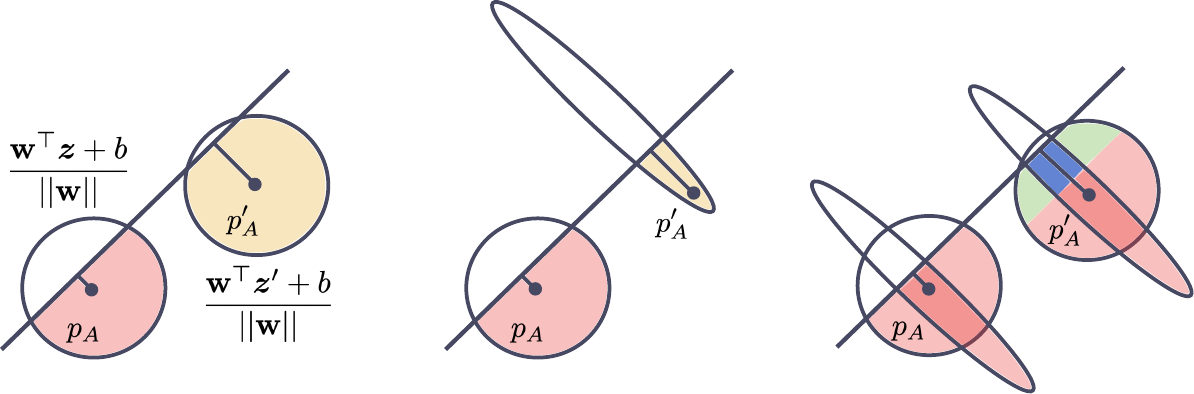}\\
        (a) Non-pathological distributions \hspace{.7cm} (b) Source of error \ref{error_source_1} \hspace{1.2cm}  (c) Source of error \ref{error_source_2}
        \label{fig:subfig1redun}
    \caption{\textbf{Noise distributions in latent space} $\epsilon_{\eta}$. (a) For non-pathological noise distributions, the fraction of samples from the top-1 class $p_A$ is commensurate to the margin of the decision boundary. (b) Error source \ref{error_source_1}: If the noise distributions $\epsilon_{\eta}$ and $\epsilon_{\eta}'$ in latent space for two signals $\bz$ and $\bz'$ are not identical, the inferred margins become incomparable. In this instance $\bz'$ has a larger margin than $\bz$, yet under $\epsilon_{\eta}'$, $p_A'<p_A$. (c) Error source \ref{error_source_2}: Different latent noise distributions correspond to different non-linear relationships between $p_A$ and the margin. Here, $\mathrm{margin}\;z' \approx 3 \times\mathrm{margin}\;z $. For the ellipsoid latent noise, $p_A' = p_A + \mathrm{``blue \; area"}\approx p_A+1/4 \times p_A = 1.25 \times p_A$. However, for the circular latent noise, $p_A' = p_A + \mathrm{``blue \; and \; green \; area"}\approx p_A+1/2 \times p_A = 1.5 \times p_A$. Firstly, in contrast to what one might initially guess, $p_A' \neq 3 \times p_A$ for both noise distributions. Secondly, the rate \( a \) at which \( p_A' \) changes, \( p_A' = a \times p_A \), also differs depending on the latent noise distribution. Therefore, an accurate model of $\epsilon_{\eta}$ is crucial for comparing the margins of $\bz$ and $\bz'$.}
    \label{fig:error_sources_complete}
\end{figure*}

The results are reported in Table \ref{table-res-cifar} and Table \ref{table-res-imagenet}. We see that in all cases test-time augmentation improves calibration while resulting in small changes in the clean accuracy. We distinguish between the easy CIFAR-100 case and the more difficult CIFAR-10 and Imagenet cases. For CIFAR-100 we get large improvements of up to $\sim30\%$ in ECE, $\sim32\%$ in AUROC and $0.10$ in Brier score, that result from the fact that the CIFAR-100 clean accuracy is relatively low. As such it is easier to detect potentially misclassified samples and improve calibration in terms of our calibration metrics. For the CIFAR-10 case, the challenge stems from having high clean accuracy. Intuitively, at the extreme, if the classifier is 100\% accurate on the test set, it is also 100\% calibrated, and thus test-time augmentation results in no gains. In terms of AUROC, we still get very large gains of up to $~40\%$. However, in terms of ECE, the gains are smaller $\sim3\%$ and since the accuracy drops by comparable values, it is more appropriate to compare the Brier score. For the Brier score, we get consistent improvements of $0.01$. For the Imagenet case, the difficulty stems from the fact that we choose $S=2$ which is the bare minimum for test-time augmentation. Note how we are then in the highly restricted uncertainty range $p(A\vert \bx , f)=\{0.25,0.5,0.75,1\}$. We gain up to $\sim3\%$ in ECE and $\sim20\%$ in AUROC. Notably, even for this minimal number of samples $S$ we still get improvements in Brier score of up to $\sim0.05$. 

In Figure \ref{fig:main2}, we plot how the Brier score improves when increasing the number of samples $S$ from $S=10$ to $S=50$. We see similar behavior for the other networks.
Notably, different transformations result in different gains in calibration. In particular, we note that natural transformations consistently outperform Gaussian perturbations by up to $\sim15\%$ in ECE and $\sim30\%$ in AUROC. Among natural transformations, rotations perform better than other transformations in terms of the Brier score.

\section{Two sources of error and the transfer learning model}

We now explore possible explanations for the differences in performance among the different transformations. We start by deriving a result  similar to Proposition \ref{main_theorem} for an \textit{unknown noise distribution in latent space} $\rho$, and then elucidate two important sources of error.

\begin{proposition}\label{theorem_general_noise}
Let $f = g \circ h$ be a neural network used for binary classification, where $h : \mathbb{R}^{d} \rightarrow \mathbb{R}^{d_{\eta}}$ is an encoder and $g : \mathbb{R}^{d_{\eta}} \rightarrow \mathcal{Y}$ is the final classification layer, $\bx$ is an input image and $\mathcal{T}(\bx)\sim \tau$ is some randomized transformation of the input. Let
\begin{equation}
p_A(\bx) = \mathbb{P}_{\mathcal{T}(\bx)}\left( f(\mathcal{T}(\bx)) = A\right),
\end{equation}
be the probability of sampling the original class $f(\bx)=A \in \mathcal{Y}$ under transformations $\mathcal{T}(\bx)\sim \tau$. Let $\epsilon_{\eta}$ be the corresponding perturbations to the latent representations and let $\bw$ be the separating hyperplane between class $A$ and class $B$. Assume that the inverse cumulative distribution $F^{-1}$ of $\bw^{\top}\epsilon_{\eta}\sim \rho$ exists and $\rho$ is zero-mean. Then
\begin{equation}
\begin{split}
    p(A\vert \bx , f) = \frac{1}{1+e^{F^{-1}(1-p_A(\bx))}},
\end{split}
\end{equation}
where $p(A\vert \bx , f)$ is the softmax probability outputted by $f$ given $\bx$.
\end{proposition}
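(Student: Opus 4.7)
The plan is to follow the same skeleton as the proof sketch of Proposition~\ref{main_theorem}, but replace the Gaussian-specific step by a direct inversion of the CDF of $\bw^{\top}\epsilon_{\eta}$. First I would apply the decomposition $f=g\circ h$ to obtain the folk identity \eqref{eq:folk}, writing
\begin{equation*}
    p(A\vert \bx,f) \;=\; \frac{1}{1+e^{-(\bw^{\top}\bz+b)}},
\end{equation*}
where $\bz=h(\bx)$ and $\bw^{\top}\bz+b$ is the signed margin of $\bz$ to the $A$--$B$ decision hyperplane. All of the work then reduces to expressing the margin $m:=\bw^{\top}\bz+b$ in terms of $p_A(\bx)$ using the law of $\bw^{\top}\epsilon_{\eta}$.

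Next I would rewrite the event $\{f(\mathcal{T}(\bx))=A\}$ in latent space. Since $h(\mathcal{T}(\bx))=\bz+\epsilon_{\eta}$ by definition of $\epsilon_{\eta}$, this event is exactly $\{\bw^{\top}(\bz+\epsilon_{\eta})+b>0\}$, so
\begin{equation*}
    p_A(\bx) \;=\; \mathbb{P}\bigl(\bw^{\top}\epsilon_{\eta}>-m\bigr) \;=\; 1-F(-m),
\end{equation*}
where $F$ is the CDF of $\bw^{\top}\epsilon_{\eta}\sim\rho$. By the assumed existence of $F^{-1}$, this inverts cleanly to $-m=F^{-1}(1-p_A(\bx))$, i.e.\ $m=-F^{-1}(1-p_A(\bx))$. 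Substituting into the folk identity gives
\begin{equation*}
    p(A\vert \bx,f) \;=\; \frac{1}{1+e^{F^{-1}(1-p_A(\bx))}},
\end{equation*}
which is the claimed formula.

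The argument is almost algorithmic once the right variables are isolated, so I do not anticipate a serious technical obstacle; the only points that deserve care are (i) matching the sign convention of $\bw$ to the folk reparameterization used in \eqref{eq:folk} so that the event $\{f(\mathcal{T}(\bx))=A\}$ corresponds to the positive half-space of $\bw^{\top}\cdot+b$, and (ii) noting where the zero-mean hypothesis on $\rho$ enters. The zero-mean assumption is not used to carry out the inversion itself, but it is what makes the statement meaningful: without it, one could absorb an arbitrary shift into $\rho$ and make $p_A(\bx)$ at a fixed margin essentially arbitrary; centering $\rho$ pins down the correspondence between the margin and $p_A$ uniquely. Once these conventions are fixed, the chain $p_A\mapsto F^{-1}(1-p_A)\mapsto-m\mapsto$ sigmoid closes the proof, and Proposition~\ref{main_theorem} is recovered as the special case $F=\Phi(\cdot/(\sigma_*\Vert\mathcal{J}_h(\bx)\bw\Vert))$, using $\Phi^{-1}(1-u)=-\Phi^{-1}(u)$ to match signs.
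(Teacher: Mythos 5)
Your proposal is correct and follows essentially the same route as the paper's own proof: express the event $\{f(\mathcal{T}(\bx))=A\}$ in latent space as $\bw^{\top}(\bz+\epsilon_{\eta})+b>0$, obtain $p_A(\bx)=1-F(-\bw^{\top}\bz-b)$, invert $F$ to get $\bw^{\top}\bz+b=-F^{-1}(1-p_A(\bx))$, and substitute into the two-class softmax identity \eqref{eq:folk}. Your added remarks on the sign convention, the role of the zero-mean assumption, and the recovery of Proposition~\ref{main_theorem} as the Gaussian special case are consistent with the paper and do not change the argument.
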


Thus the Gaussian model of Equation \eqref{practical_method_1} is a subcase of Proposition \ref{theorem_general_noise} where we model $F^{-1}(x)$ as $a\Phi^{-1}(x)$. We now describe two important sources of error and methods to quantify them.

\begin{enumerate}
\item \label{error_source_1} $F^{-1}(x)$ depends on $\bx$. In the Gaussian model, we assumed that $F^{-1}(x)=a\Phi^{-1}(x), \; \forall \bx$ . If the noise in latent space is different across data samples, then the estimated confidence levels based on $p_A(\bx)$ are not directly comparable. To quantify this error we can first make an empirical estimate of $F$ for different $\bx$ through sampling and a Kolmogorov--Smirnov test. The empirical $F(x)$ are one-dimensional functions. We can thus use quantiles to estimate the spread of these functions. To summarize the variability of the different $F(x)$ into a single number we create the statistic $\mathrm{Var}=\max_x \vert Q_{2.5}(x)-Q_{97.5}(x) \vert$ where $Q_{2.5}(x)$ is the 2.5th percentile and $Q_{97.5}(x)$ and the 97.5th percentile. Intuitively this is the ``maximum spread" of the empirical cumulatives along the $y$-axis.

\item \label{error_source_2} Wrong model for latent noise, $F^{-1}(x)\neq a\Phi^{-1}(x)$. We've used the Gaussian model for different input transformations and thus it might be that $F^{-1}(x)$ is different based on the input transformations and provides uncalibrated estimates of $p(A\vert \bx, f)$ for some of them. We can use a Kolmogorov--Smirnov test to estimate $F^{-1}(x)$ empirically and compare it with $a\Phi^{-1}(x)$. We denote the Kolmogorov--Smirnov statistic $\sup_x \vert F^{-1}(x)-a\Phi^{-1}(x) \vert$ as KS.
\end{enumerate} 

\begin{figure*}[t!]
    \centering

    \begin{subfigure}{\textwidth}
        \centering
        \includegraphics[width=\linewidth]{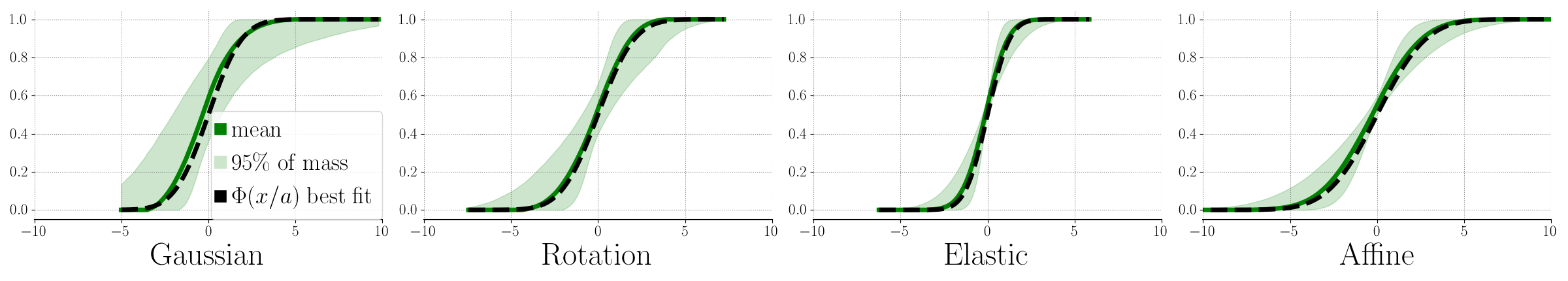}
        \caption{CIFAR-100, ResNet20}
        \label{fig:subfig11}
    \end{subfigure}

    \begin{subfigure}{\textwidth}
        \centering
        \includegraphics[width=\linewidth]{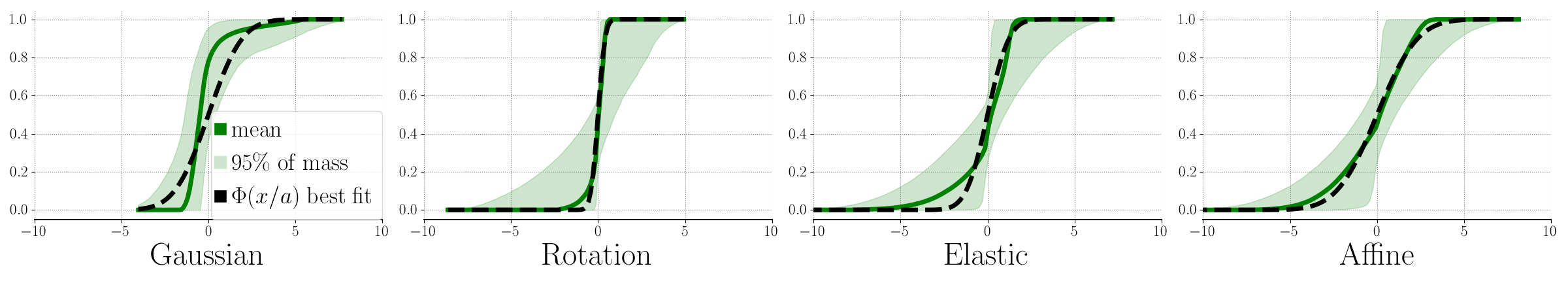}
        \caption{CIFAR-10, ResNet18}
        \label{fig:subfig21}
    \end{subfigure}

    \caption{\textbf{Empirical cumulatives} $F_n$ with quantiles for CIFAR-100, ResNet20 and CIFAR-10, ResNet18. For Gaussian noise on the input signals the mean empirical cumulative $F_n$ differs the most from the cumulative of the normal $\Phi(x/a)$. Furthermore, for Gaussian noise, the empirical cumulatives $F_n$ for the different samples $\bx$ exhibit the largest variability.}
    \label{fig:main1}
\end{figure*}

We plot a graphic illustration of these two sources of error in Figure \ref{fig:error_sources_complete} and also provide an alternative illustration in Appendix~\ref{appendix_graphical_illustration}. 
We explore our hypotheses about the two sources of error in explaining the sub-optimal performance of Gaussian perturbations. 

Error \ref{error_source_1}: In Figure \ref{fig:main1} we plot the cumulatives of $\rho$ for different $\bx$ and different transforms $\mathcal{T}(\bx)$. Specifically, for each case, given multiple cumulatives (one for each $\bx$) we plot the mean cumulative as well as a $95\%$ confidence interval \emph{along the $y$-axis}.  We see that the cumulatives for different $\bx$ have a larger variability in the Gaussian case compared to natural transformations (especially in Figure \ref{fig:subfig11}). 

Error \ref{error_source_2}: In Figure \ref{fig:main1} we also plot the best fit to the mean cumulative with the function $\Phi(x/a)$ (in terms of absolute discrepancy). We observe that for Gaussian perturbations, the cumulatives $F(x)$, especially for CIFAR-10, deviate significantly from the best fit of $\Phi(x/a)$. Specifically, the empirical estimate of $F(x)$ is asymmetric, with fatter tails towards positive numbers, while $\Phi(x/a)$ is symmetric. We also include our two statistics in the Tables as columns Var and KS, where we observe that they correlate with lower Brier score. 

In short, transformations that outperform in calibration either have more consistent noise distributions in latent space across samples $\bx$, or they result in noise distributions in latent space that more closely match our Gaussian model $\Phi(x/a)$. More precisely both the KS statistic and the Var statistic exhibit a statistically significant correlation with the ECE score across all datasets and architectures ($r=0.42,p=0.005$ and $r=0.46,p=0.002$ respectively), which we hope to exploit (we provide more data in Appendix~\ref{appendix_correlations_var_ks}).

Considering the above theoretical discussion we propose a transfer learning model to reduce Error \ref{error_source_2}. The main idea is to learn an empirical version of $F$, denoted $F_n$, which captures the asymmetries discovered in latent space for Gaussian perturbations.

\begin{table*}[t!]
    \caption{\textbf{Transfer learning.} Results for CIFAR-100 using transfer learning from CIFAR-10, with the best hyperparameters for the ECE. We first learn an empirical cumulative $F_n$ for Gaussian augmentations in the CIFAR-10 case, and the corresponding architectures. We then use the learned cumulative $F_n$ for Gaussian augmentations in the CIFAR-100 case, and its corresponding architectures. We chose the hyperparameter $a$ through the test-validation set so as to maximize accuracy and minimize the ECE. We see that we can get significant improvements in ECE with the learned $F_n$ compared to the case $F=\Phi$.}
    \label{table-res1}
    \centering
    \begin{tabular}{lclllllll}
    \toprule
    Model & Cumulative & $S$ & Acc $\uparrow$ & ECE $\downarrow$ & AUROC $\uparrow$ & Brier $\downarrow$ & KS $\downarrow$ \\
   \hline
\multirow{2}{*}{MobileNet} & $\Phi$ & 10 & 0.734 & \colorcellece{0.165} & \textbf{\colorcellauroc{0.706}}  & \colorcellbrierone{0.450} & 0.102\\
 & $F_n$ & 10 & 0.758 & \textbf{\colorcellece{0.010}} & \colorcellauroc{0.538} & \textbf{\colorcellbrierone{0.421}} & 0.069 \\
\hline
\multirow{2}{*}{ResNet20} & $\Phi$ & 10 & 0.639 & \colorcellece{0.205} & \textbf{\colorcellauroc{0.715}} & \colorcellbrierone{0.590} & 0.087  \\
 & $F_n$ & 10 & 0.672 & \textbf{\colorcellece{0.071}} & \colorcellauroc{0.637} & \textbf{\colorcellbrierone{0.536}} & 0.069 \\
\hline
\multirow{2}{*}{ResNet56} & $\Phi$ & 10 & 0.653 & \colorcellece{0.169} & \textbf{\colorcellauroc{0.747}} & \colorcellbrierone{0.552} & 0.109 \\
 & $F_n$ & 10 & 0.711 & \textbf{\colorcellece{0.053}}  & \colorcellauroc{0.647} & \textbf{\colorcellbrierone{0.479}} & 0.065 \\
\bottomrule
\end{tabular}
\end{table*}

\paragraph{Transfer Learning Model.}
Provided two datasets $\mathcal{D}_1$ and $\mathcal{D}_2$ (say CIFAR-10 and CIFAR-100), we estimate the empirical cumulative $F_n$ of the latent space noise distribution $\rho$ using a Kolmogorov--Smirnov test on $\mathcal{D}_1$, and then use it in a transfer learning setup on $\mathcal{D}_2$. Specifically, the probability of the top-1 class is modeled as 
\begin{equation}\label{practical_method_2}
    p(A\vert \bx , f) = \frac{1}{1+e^{a F_n^{-1}(1-p_A(\bx))}}, \;\; \forall \bx,
\end{equation}
where $A=f(\bx)$ is the class predicted by $f$ on the base sample $\bx$, $p_A(\bx) = \mathbb{P}_{\mathcal{T}(\bx)}\left( f(\mathcal{T}(\bx)) = A\right)$, and $a$ is a hyperparameter common to all $\bx$ which is estimated using a validation set.

\section{Transfer learning model experiments}

We use the transfer learning model to improve the calibration results of the Gaussian perturbations. Specifically, we learn an empirical cumulative $F_n$ from CIFAR-10 ($\mathcal{D}_1$) and use it on CIFAR-100 ($\mathcal{D}_2$) using Equation~\eqref{practical_method_2}. We again perform a grid-search over $a\in\{0.001,0.005,0.01,0.05,0.1,0.5,1,10,100\}$ and the hyperparameters of the Gaussian perturbations.

The results are reported in Table \ref{table-res1} where different calibration metrics are compared for $\Phi(x/a)$ and $F_n$. We see that the learned empirical distribution $F_n$ for the noise $\rho$ results in significant calibration improvements in terms of ECE, over $\Phi(x/a)$, validating our theoretical analysis. In particular, by combining our learned non-linear function with Gaussian perturbations, we match or outperform the best natural transformations in terms of ECE and Brier score. 

\section{Related work}

Our analysis is closely related to test-time augmentation which has previously been investigated for aleatoric uncertainty estimation in image segmentation by \citet{wang2019aleatoric}. However, it is noteworthy that this prior work does not specifically delve into the classification setting, nor does it explicitly address the closed-source model scenario. Furthermore, a detailed theoretical analysis, a focal point of our study, is not provided by these authors.

Other relevant works, including those by \citet{ashukha2019pitfalls,shanmugam2021better,kim2020learning}, have explored test-time augmentation to enhance uncertainty estimates. Notably, their focus lies in cases where the softmax categorical distribution can be readily accessed—a less complex scenario compared to the closed-source setting, which we rigorously analyze in this work. 

In the closed-source natural language generation setting, \citet{llmscan2023} and \citet{lin2023generating} propose to generate multiple answers for a given prompt and estimate the answer consistency as a measure of uncertainty.  \citet{lin2023generating} use a fixed prompt to generate multiple answers, while in a similar spirit to our work, \citet{llmscan2023} rephrase prompts  in some experiments to create noisy inputs. 

A comprehensive body of literature exists on the topic of estimating uncertainty in deep neural network models, when access to the softmax categorical distribution is available \citep{guo2017calibration, lakshminarayanan2016simple, blundell2015weight, maddox2019simple, wenzel2020good}. The most straightforward method involves utilizing the categorical distribution itself as an uncertainty estimate \citep{guo2017calibration}. Noteworthy enhancements can be achieved by applying tempering to the logits just before the application of the softmax function \citep{guo2017calibration}.

Various advanced methods have been proposed to refine uncertainty estimates, including techniques like deep ensembles \citep{lakshminarayanan2016simple} and Bayesian approaches, such as the Laplace approximation \citep{ritter2018scalable}, Markov chain Monte Carlo sampling \citep{wenzel2020good}, and Variational Inference \citep{blundell2015weight}. For an extensive overview of these methods, refer to \citet{arbel2023primer}. Fundamentally, these approaches share a common principle of averaging the softmax categorical distribution over multiple minima of the loss, contributing to improved uncertainty estimation in deep neural networks.


\section{Discussion}
We conducted a thorough analysis of multiple queries as a method for obtaining calibrated predictions from closed-source deep-learning models. Notably, we found that natural transformations, such as rotations, provide a straightforward means of identifying false positives. Their appeal lies in their practicality, as they can be implemented without coding, merely requiring the rotation and translation of the measurement device.

An intriguing direction for future research involves delving deeper into the connection between input transformations and the noise distribution in latent space. Exploring this relationship has the potential to further enhance the calibration of predictions, presenting an exciting avenue for continued investigation.

\bibliography{icml2024}

\begin{thebibliography}{28}
\providecommand{\natexlab}[1]{#1}
\providecommand{\url}[1]{\texttt{#1}}
\expandafter\ifx\csname urlstyle\endcsname\relax
  \providecommand{\doi}[1]{doi: #1}\else
  \providecommand{\doi}{doi: \begingroup \urlstyle{rm}\Url}\fi

\bibitem[Abdar et~al.(2021)Abdar, Pourpanah, Hussain, Rezazadegan, Liu, Ghavamzadeh, Fieguth, Cao, Khosravi, Acharya, et~al.]{abdar2021review}
Abdar, M., Pourpanah, F., Hussain, S., Rezazadegan, D., Liu, L., Ghavamzadeh, M., Fieguth, P., Cao, X., Khosravi, A., Acharya, U.~R., et~al.
\newblock A review of uncertainty quantification in deep learning: Techniques, applications and challenges.
\newblock \emph{Information Fusion}, 76:\penalty0 243--297, 2021.

\bibitem[Achiam et~al.(2023)Achiam, Adler, Agarwal, Ahmad, Akkaya, Aleman, Almeida, Altenschmidt, Altman, Anadkat, et~al.]{achiam2023gpt}
Achiam, J., Adler, S., Agarwal, S., Ahmad, L., Akkaya, I., Aleman, F.~L., Almeida, D., Altenschmidt, J., Altman, S., Anadkat, S., et~al.
\newblock {GPT}-4 technical report.
\newblock \emph{arXiv preprint arXiv:2303.08774}, 2023.

\bibitem[Anil et~al.(2023)Anil, Borgeaud, Wu, Alayrac, Yu, Soricut, Schalkwyk, Dai, Hauth, Millican, Silver, Petrov, Johnson, Antonoglou, Schrittwieser, Glaese, Chen, Pitler, Lillicrap, Lazaridou, Firat, Molloy, Isard, Barham, Hennigan, Lee, Viola, Reynolds, Xu, Doherty, Collins, Meyer, Rutherford, Moreira, Ayoub, Goel, Tucker, Piqueras, Krikun, Barr, Savinov, Danihelka, Roelofs, White, Andreassen, von Glehn, Yagati, Kazemi, Gonzalez, Khalman, Sygnowski, and et~al.]{team2023gemini}
Anil, R., Borgeaud, S., Wu, Y., Alayrac, J., Yu, J., Soricut, R., Schalkwyk, J., Dai, A.~M., Hauth, A., Millican, K., Silver, D., Petrov, S., Johnson, M., Antonoglou, I., Schrittwieser, J., Glaese, A., Chen, J., Pitler, E., Lillicrap, T.~P., Lazaridou, A., Firat, O., Molloy, J., Isard, M., Barham, P.~R., Hennigan, T., Lee, B., Viola, F., Reynolds, M., Xu, Y., Doherty, R., Collins, E., Meyer, C., Rutherford, E., Moreira, E., Ayoub, K., Goel, M., Tucker, G., Piqueras, E., Krikun, M., Barr, I., Savinov, N., Danihelka, I., Roelofs, B., White, A., Andreassen, A., von Glehn, T., Yagati, L., Kazemi, M., Gonzalez, L., Khalman, M., Sygnowski, J., and et~al.
\newblock Gemini: {A} family of highly capable multimodal models.
\newblock \emph{CoRR}, abs/2312.11805, 2023.

\bibitem[Arbel et~al.(2023)Arbel, Pitas, Vladimirova, and Fortuin]{arbel2023primer}
Arbel, J., Pitas, K., Vladimirova, M., and Fortuin, V.
\newblock A primer on {Bayesian} neural networks: review and debates.
\newblock \emph{arXiv preprint arXiv:2309.16314}, 2023.

\bibitem[Ashukha et~al.(2019)Ashukha, Lyzhov, Molchanov, and Vetrov]{ashukha2019pitfalls}
Ashukha, A., Lyzhov, A., Molchanov, D., and Vetrov, D.
\newblock Pitfalls of in-domain uncertainty estimation and ensembling in deep learning.
\newblock In \emph{International Conference on Learning Representations}, 2019.

\bibitem[Bishop(2006)]{bishop2006pattern}
Bishop, C.~M.
\newblock \emph{Pattern recognition and machine learning}.
\newblock Springer, 2006.

\bibitem[Blundell et~al.(2015)Blundell, Cornebise, Kavukcuoglu, and Wierstra]{blundell2015weight}
Blundell, C., Cornebise, J., Kavukcuoglu, K., and Wierstra, D.
\newblock Weight uncertainty in neural network.
\newblock In \emph{International Conference on Machine Learning}, 2015.

\bibitem[Deng et~al.(2009)Deng, Dong, Socher, Li, Li, and Fei-Fei]{deng2009imagenet}
Deng, J., Dong, W., Socher, R., Li, L.-J., Li, K., and Fei-Fei, L.
\newblock Imagenet: A large-scale hierarchical image database.
\newblock In \emph{Conference on Computer Vision and Pattern Recognition}, 2009.

\bibitem[Dosovitskiy et~al.(2020)Dosovitskiy, Beyer, Kolesnikov, Weissenborn, Zhai, Unterthiner, Dehghani, Minderer, Heigold, Gelly, et~al.]{dosovitskiy2020image}
Dosovitskiy, A., Beyer, L., Kolesnikov, A., Weissenborn, D., Zhai, X., Unterthiner, T., Dehghani, M., Minderer, M., Heigold, G., Gelly, S., et~al.
\newblock An image is worth 16x16 words: Transformers for image recognition at scale.
\newblock In \emph{International Conference on Learning Representations}, 2020.

\bibitem[Guo et~al.(2017)Guo, Pleiss, Sun, and Weinberger]{guo2017calibration}
Guo, C., Pleiss, G., Sun, Y., and Weinberger, K.~Q.
\newblock On calibration of modern neural networks.
\newblock In \emph{International Conference on Machine Learning}, pp.\  1321--1330. PMLR, 2017.

\bibitem[He et~al.(2016)He, Zhang, Ren, and Sun]{he2016deep}
He, K., Zhang, X., Ren, S., and Sun, J.
\newblock Deep residual learning for image recognition.
\newblock In \emph{Proceedings of the IEEE conference on computer vision and pattern recognition}, pp.\  770--778, 2016.

\bibitem[Howard et~al.(2017)Howard, Zhu, Chen, Kalenichenko, Wang, Weyand, Andreetto, and Adam]{howard2017mobilenets}
Howard, A.~G., Zhu, M., Chen, B., Kalenichenko, D., Wang, W., Weyand, T., Andreetto, M., and Adam, H.
\newblock Mobilenets: Efficient convolutional neural networks for mobile vision applications.
\newblock \emph{arXiv preprint arXiv:1704.04861}, 2017.

\bibitem[Kim et~al.(2020)Kim, Kim, and Kim]{kim2020learning}
Kim, I., Kim, Y., and Kim, S.
\newblock Learning loss for test-time augmentation.
\newblock \emph{Advances in Neural Information Processing Systems}, 2020.

\bibitem[Krizhevsky \& Hinton(2009)Krizhevsky and Hinton]{krizhevsky2009learning}
Krizhevsky, A. and Hinton, G.
\newblock Learning multiple layers of features from tiny images.
\newblock \emph{Citeseer}, 2009.

\bibitem[Lakshminarayanan et~al.(2017)Lakshminarayanan, Pritzel, and Blundell]{lakshminarayanan2016simple}
Lakshminarayanan, B., Pritzel, A., and Blundell, C.
\newblock Simple and scalable predictive uncertainty estimation using deep ensembles.
\newblock \emph{Advances in Neural Information Processing Systems}, 30, 2017.

\bibitem[Lin et~al.(2023)Lin, Trivedi, and Sun]{lin2023generating}
Lin, Z., Trivedi, S., and Sun, J.
\newblock Generating with confidence: Uncertainty quantification for black-box large language models.
\newblock \emph{arXiv preprint arXiv:2305.19187}, 2023.

\bibitem[Maddox et~al.(2019)Maddox, Izmailov, Garipov, Vetrov, and Wilson]{maddox2019simple}
Maddox, W.~J., Izmailov, P., Garipov, T., Vetrov, D.~P., and Wilson, A.~G.
\newblock {A simple baseline for Bayesian uncertainty in deep learning}.
\newblock \emph{Advances in Neural Information Processing Systems}, 32:\penalty0 13153--13164, 2019.

\bibitem[Murphy(1973)]{murphy1973new}
Murphy, A.~H.
\newblock A new vector partition of the probability score.
\newblock \emph{Journal of Applied Meteorology and Climatology}, 12\penalty0 (4):\penalty0 595--600, 1973.

\bibitem[Murphy(2012)]{murphy2012machine}
Murphy, K.~P.
\newblock \emph{Machine learning: a probabilistic perspective}.
\newblock MIT Press, 2012.

\bibitem[Naeini et~al.(2015)Naeini, Cooper, and Hauskrecht]{naeini2015obtaining}
Naeini, M.~P., Cooper, G., and Hauskrecht, M.
\newblock {Obtaining well calibrated probabilities using Bayesian binning}.
\newblock In \emph{Twenty-Ninth AAAI Conference on Artificial Intelligence}, 2015.

\bibitem[Nixon et~al.(2019)Nixon, Dusenberry, Zhang, Jerfel, and Tran]{nixon2019measuring}
Nixon, J., Dusenberry, M.~W., Zhang, L., Jerfel, G., and Tran, D.
\newblock Measuring calibration in deep learning.
\newblock In \emph{CVPR workshops}, volume~2, 2019.

\bibitem[Ritter et~al.(2018)Ritter, Botev, and Barber]{ritter2018scalable}
Ritter, H., Botev, A., and Barber, D.
\newblock {A scalable Laplace approximation for neural networks}.
\newblock In \emph{International Conference on Learning Representations}, 2018.

\bibitem[Shanmugam et~al.(2021)Shanmugam, Blalock, Balakrishnan, and Guttag]{shanmugam2021better}
Shanmugam, D., Blalock, D., Balakrishnan, G., and Guttag, J.
\newblock Better aggregation in test-time augmentation.
\newblock In \emph{Proceedings of the IEEE/CVF international conference on computer vision}, 2021.

\bibitem[Sprent \& Smeeton(2007)Sprent and Smeeton]{sprent2007applied}
Sprent, P. and Smeeton, N.~C.
\newblock \emph{Applied nonparametric statistical methods}.
\newblock CRC Press, 2007.

\bibitem[Tan \& Le(2019)Tan and Le]{tan2019efficientnet}
Tan, M. and Le, Q.
\newblock Efficientnet: Rethinking model scaling for convolutional neural networks.
\newblock In \emph{International Conference on Machine Learning}, pp.\  6105--6114. PMLR, 2019.

\bibitem[Wang et~al.(2019)Wang, Li, Aertsen, Deprest, Ourselin, and Vercauteren]{wang2019aleatoric}
Wang, G., Li, W., Aertsen, M., Deprest, J., Ourselin, S., and Vercauteren, T.
\newblock Aleatoric uncertainty estimation with test-time augmentation for medical image segmentation with convolutional neural networks.
\newblock \emph{Neurocomputing}, 338:\penalty0 34--45, 2019.

\bibitem[Wenzel et~al.(2020)Wenzel, Roth, Veeling, Swiatkowski, Tran, Mandt, Snoek, Salimans, Jenatton, and Nowozin]{wenzel2020good}
Wenzel, F., Roth, K., Veeling, B.~S., Swiatkowski, J., Tran, L., Mandt, S., Snoek, J., Salimans, T., Jenatton, R., and Nowozin, S.
\newblock {How good is the Bayes posterior in deep neural networks really?}
\newblock \emph{International Conference on Machine Learning}, 2020.

\bibitem[Xiong et~al.(2023)Xiong, Hu, Lu, Li, Fu, He, and Hooi]{llmscan2023}
Xiong, M., Hu, Z., Lu, X., Li, Y., Fu, J., He, J., and Hooi, B.
\newblock Can {LLMs} express their uncertainty? an empirical evaluation of confidence elicitation in {LLMs}.
\newblock \emph{arXiv preprint arXiv:2306.13063}, 2023.

\end{thebibliography}
\bibliographystyle{icml2024}

\onecolumn
\appendix

\section{Additional proofs}\label{appendix_additional_proofs}
\begin{proposition}
Let $f = g \circ h$ be a neural network used for binary classification, where $h : \mathbb{R}^{d} \rightarrow \mathbb{R}^{d_{\eta}}$ is an encoder and $g : \mathbb{R}^{d_{\eta}} \rightarrow \mathcal{Y}$ is the final classification layer, $\bx$ is an input image and $\epsilon_{\tau} \sim \mathcal{N}(0, \sigma_*^2\bI)$ is Gaussian noise added to the input. Let $\mathcal{J}_h$ be the Jacobian of $h$ at $\bx$,  and let $\bw$ be the separating hyperplane between class $A$ and class $B$.
Then the softmax probability outputted by $f$ given $\bx$, $p(A\vert \bx , f)$, can be related to the probability of sampling the original class $f(\bx)=A \in \mathcal{Y}$ under perturbations  $\epsilon_{\tau}$, $p_A(\bx)$,  by
\begin{equation}
    p(A\vert \bx , f) \approx \frac{1}{1+e^{-\sigma_*\Vert \mathcal{J}_h(\bx)\bw \Vert \Phi^{-1}(p_A(\bx))}}.
\end{equation}
\end{proposition}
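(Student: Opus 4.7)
The plan is to generalize the sketch already given in the main text by (i) using a first-order Taylor expansion of the encoder $h$ to transport the input Gaussian $\epsilon_{\tau}$ into a Gaussian in latent space with a nontrivial covariance, and (ii) tracking how the norm $\|\bw\|$ and that covariance enter the Gaussian CDF computation. The $\approx$ sign in the statement will come entirely from the linearization in step (i); everything downstream is exact under that approximation.

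More concretely, I would first write $h(\bx + \epsilon_{\tau}) \approx h(\bx) + \mathcal{J}_h(\bx)\epsilon_{\tau} = \bz + \epsilon_{\eta}$, so $\epsilon_{\eta} := \mathcal{J}_h(\bx)\epsilon_{\tau}$ is a zero-mean Gaussian in $\mathbb{R}^{d_\eta}$ with covariance $\sigma_*^2\,\mathcal{J}_h(\bx)\mathcal{J}_h(\bx)^{\top}$. Next I would note that for the binary classifier with decision boundary $\bw^{\top}\bz + b = 0$, the class-$A$ event $f(\bx+\epsilon_\tau)=A$ is equivalent (under the linearization) to $\bw^{\top}(\bz + \epsilon_{\eta}) + b > 0$. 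Since $\bw^{\top}\epsilon_{\eta}$ is a scalar Gaussian with variance $\sigma_*^2 \|\mathcal{J}_h(\bx)\bw\|^2$ (matching the notation in the statement), one gets
\begin{equation*}
p_A(\bx) \;=\; \Phi\!\left(\frac{\bw^{\top}\bz + b}{\sigma_*\,\|\mathcal{J}_h(\bx)\bw\|}\right).
\end{equation*}
Inverting this yields $\bw^{\top}\bz + b = \sigma_*\,\|\mathcal{J}_h(\bx)\bw\|\,\Phi^{-1}(p_A(\bx))$.

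The final step is to reuse the binary-softmax identity \eqref{eq:folk} already derived in the paper, namely $p(A\vert \bx, f) = 1/(1+e^{-(\bw^{\top}\bz+b)})$, and substitute the expression just obtained for $\bw^{\top}\bz + b$. This gives the claimed formula
\begin{equation*}
p(A\vert \bx, f) \;\approx\; \frac{1}{1 + e^{-\sigma_*\,\|\mathcal{J}_h(\bx)\bw\|\,\Phi^{-1}(p_A(\bx))}}.
\end{equation*}
The only genuine subtlety is the linearization: strictly speaking, the argument treats $h$ as affine in a neighbourhood of $\bx$ of radius $O(\sigma_*)$, which is the source of the $\approx$. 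I expect that to be the main obstacle if one wanted a quantitative version: controlling the residual would require either a smoothness/Lipschitz assumption on $\mathcal{J}_h$ or a bound on $\sigma_*$ relative to the local curvature of $h$. Everything else—the pushforward of the Gaussian through the linear map, the projection onto $\bw$, and the inversion through $\Phi$—is routine Gaussian calculus, and the identification of $\bw^{\top}\bz+b$ with the binary logit is exactly the computation already carried out in the proof sketch.
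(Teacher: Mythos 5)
Your proposal is correct and follows essentially the same route as the paper's own appendix proof: linearize $h$ via its Jacobian to push the input Gaussian into latent space, compute $p_A(\bx)=\Phi\bigl((\bw^{\top}\bz+b)/(\sigma_*\Vert \mathcal{J}_h\bw\Vert)\bigr)$, invert, and substitute into the binary-softmax identity \eqref{eq:folk}. The only difference is a harmless transpose convention for $\mathcal{J}_h$ (the paper writes the latent perturbation as $\mathcal{J}_h^{\top}\epsilon_{\tau}$, so its $\Vert\mathcal{J}_h\bw\Vert$ is your $\Vert\mathcal{J}_h^{\top}\bw\Vert$), and your remark that the $\approx$ stems solely from the linearization matches the paper's treatment.
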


\begin{proof}
We can approximate the output of the neural network $f$ evaluated at the transformation $\mathcal{T}(\bx)$ of $\bx$ as
\begin{align*}
f(\mathcal{T}(\bx)) &= (g \circ h) (\bx+\epsilon_{\tau}) \approx g(h(\bx)+\mathcal{J}_h^{\top}\cdot \epsilon_{\tau}) \\
&= g(h(\bx)+\epsilon_{\eta}),
\end{align*}
where $\epsilon_{\eta} \sim \mathcal{N}(0,\sigma_*^2\mathcal{J}_h^{\top}\mathcal{J}_h)$, and  $\mathcal{J}_h$ is the Jacobian of $h$ at $\bx$.
Thus the input to the final classification layer is $\bz = h(\bx)$ perturbed with Gaussian noise $\epsilon_{\eta} \sim \mathcal{N}(0,\sigma_*^2\mathcal{J}_h^{\top}\mathcal{J}_h)$. We now turn to the categorical distribution, resulting from applying the softmax on the final layer logits. In the binary classification case given a top-1 class prediction $A$, the softmax probability of this class is
\begin{align}
    &p(A\vert \bx , f) = \frac{e^{\bw_{A}^{\top}\bz+b_A}}{e^{\bw_{A}^{\top}\bz+b_A}+e^{\bw_{B}^{\top}\bz+b_B}}\nonumber\\
    &\,\,=\frac{1}{1+e^{-(\bw_{A}+b_A-\bw_{B}-b_B)^{\top}\bz}}=\frac{1}{1+e^{-(\bw^{\top}\bz+b)}}.\label{eq:folk_app}
\end{align}
The above simply corresponds to the folk knowledge that a softmax layer with two classes is equivalent to a single separating hyperplane that assigns classes based on the rule $\sign \left(\bw^{\top}\bz+b\right)$, specifically 
$$g(\bz) = \begin{cases}
    A & \text{if } \left(\bw^{\top}\bz+b\right) > 0, \\
    B & \text{otherwise.}
\end{cases}$$ 
After establishing that the softmax layer is equivalent to this single separating hyperplane, let us relate $p_A(\bx)$ to $\bw^{\top}\bz+b$ and the norm $\Vert\mathcal{J}_h\bw\Vert$ using random sampling. We have
\begin{align*}
&p_A(\bx) =\mathbb{P}_{\epsilon_{\tau}}\left( f(\bx+\epsilon_{\tau}) = f(\bx)\right)\\
&=\,\, \mathbb{P}_{\epsilon_{\eta}}\left( \sign \left(\bw^{\top}(\bz+\epsilon_{\eta})+b\right) = \sign \left(\bw^{\top}\bz+b\right)\right).
\end{align*}
Since $f(\bx)=A \iff \sign (\bw^{\top}\bz+b) > 0$, then
\begin{align*}
p_A(\bx) &= \mathbb{P}_{\epsilon_{\eta}}\left( \bw^{\top}(\bz+\epsilon_{\eta})+b > 0 \right)\\
&= \mathbb{P}_{\epsilon_{\tau}}\left( \bw^{\top}(\bz+\mathcal{J}_h^{\top}\cdot \epsilon_{\tau})+b > 0 \right)\\
&= \Phi\left(\frac{\bw^{\top}\bz+b}{\sigma_*\Vert \mathcal{J}_h\bw \Vert} \right).    
\end{align*}
Thus $\bw^{\top}\bz+b = \sigma_*\Vert \mathcal{J}_h\bw \Vert \Phi^{-1}(p_A)$, combined with Equation~\eqref{eq:folk_app} this concludes the proof.
\end{proof}

\begin{proposition}
Let $f = g \circ h$ be a neural network used for binary classification, where $h : \mathbb{R}^{d} \rightarrow \mathbb{R}^{d_{\eta}}$ is an encoder and $g : \mathbb{R}^{d_{\eta}} \rightarrow \mathcal{Y}$ is the final classification layer, $\bx$ is an input image and $\mathcal{T}(\bx)\sim \tau$ is some randomized transformation of the input. Let
\begin{equation}
p_A = \mathbb{P}_{\mathcal{T}(\bx)}\left( f(\mathcal{T}(\bx)) = A\right),
\end{equation}
be the probability of sampling the original class $f(\bx)=A \in \mathcal{Y}$ under transformations $\mathcal{T}(\bx)\sim \tau$. Let the perturbations to the latent representations $\epsilon_{\eta}$ and let $\bw$ be the separating hyperplane between class $A$ and class $B$. Assume that the inverse cumulative distribution $F^{-1}$ of $\bw^{\top}\epsilon_{\eta}\sim \rho$ exists and $\rho$ is zero-mean. Then
\begin{equation}
\begin{split}
    p(A\vert \bx , f) = \frac{1}{1+e^{F^{-1}(1-p_A)}},
\end{split}
\end{equation}
where $p(A\vert \bx , f)$ is the softmax probability outputted by $f$ given $\bx$.
\end{proposition}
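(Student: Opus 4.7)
The plan is to mirror the structure of the proof of Proposition~\ref{main_theorem}, but to replace every step that uses Gaussianity with a step that only uses the existence of the inverse CDF $F^{-1}$ of $\bw^{\top}\epsilon_{\eta}\sim\rho$. Concretely, I would first carry over the reduction from the softmax to the single separating hyperplane: for binary classification, as in equation~\eqref{eq:folk}, the top-1 softmax probability of class $A$ is
\begin{equation*}
p(A\vert \bx,f)=\frac{1}{1+e^{-(\bw^{\top}\bz+b)}},
\end{equation*}
where $\bz=h(\bx)$ and $\bw,b$ denote the effective weights of the (reduced) binary classifier $g$ on the latent space. Predictive rule $g(\bz)=A\iff \bw^{\top}\bz+b>0$ is what makes $p_A$ computable in terms of $\bw^{\top}\epsilon_{\eta}$ alone, exactly as in Proposition~\ref{main_theorem}.

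Next I would rewrite $p_A(\bx)$ in terms of $F$. Using the same first-order approximation $h(\bx+\epsilon_{\tau})\approx \bz+\epsilon_{\eta}$ (i.e., interpreting $\epsilon_{\eta}$ as the pushforward of the input-space transformation into latent space), I would write
\begin{equation*}
p_A(\bx)=\mathbb{P}_{\epsilon_{\eta}}\!\left(\bw^{\top}(\bz+\epsilon_{\eta})+b>0\right)=\mathbb{P}_{\epsilon_{\eta}}\!\left(\bw^{\top}\epsilon_{\eta}>-(\bw^{\top}\bz+b)\right)=1-F\bigl(-(\bw^{\top}\bz+b)\bigr).
\end{equation*}
Because $F^{-1}$ is assumed to exist on the relevant range, I can invert to get
\begin{equation*}
\bw^{\top}\bz+b=-F^{-1}\bigl(1-p_A(\bx)\bigr).
\end{equation*}

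Finally, substituting this identity into the softmax expression yields
\begin{equation*}
p(A\vert \bx,f)=\frac{1}{1+e^{-(\bw^{\top}\bz+b)}}=\frac{1}{1+e^{F^{-1}(1-p_A(\bx))}},
\end{equation*}
which is the claimed formula. The zero-mean assumption on $\rho$ is not used in the algebraic derivation itself; I expect it serves to guarantee that the unperturbed prediction $A$ coincides with the modal (majority) prediction under $\mathcal{T}$, so that $p_A\ge 1/2$ whenever $\bw^{\top}\bz+b\ge 0$ and the formula is consistent with interpreting $p(A\vert \bx,f)$ as a confidence in the top-1 class. I would flag this in the proof for completeness.

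The main obstacle, such as it is, is conceptual rather than technical: one must argue carefully that the latent-space perturbation $\epsilon_{\eta}$ induced by $\mathcal{T}$ is well-defined as a random variable (and not $\bx$-dependent in a way that breaks the inversion step), which is precisely what motivates Error source~\ref{error_source_1} in the paper. Beyond that, the derivation is a clean one-line computation once the Gaussian-specific $\Phi$ in the proof of Proposition~\ref{main_theorem} is replaced by the generic $F$.
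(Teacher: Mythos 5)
Your proposal is correct and follows essentially the same route as the paper's own proof: reduce the binary softmax to the single hyperplane $\bw^{\top}\bz+b$, write $p_A = 1-F\bigl(-(\bw^{\top}\bz+b)\bigr)$, invert $F$, and substitute back. Your added observations (that the zero-mean assumption plays no role in the algebra and that the $\bx$-dependence of $\epsilon_{\eta}$ is the real practical caveat) are accurate but supplementary; the core derivation matches the paper's.
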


\begin{proof}
\begin{equation}
\begin{split}
p_A &= \mathbb{P}_{\epsilon_{\eta}}\left( \bw^{\top}(\bz+\epsilon_{\eta})+b > 0 \right)\\
&= \mathbb{P}_{\epsilon_{\eta}}\left( \bw^{\top}\bz+\bw^{\top}\epsilon_{\eta}+b > 0 \right)\\
&= \mathbb{P}_{\epsilon_{\eta}}\left( Z>-\bw^{\top}\bz-b\right)\\
&= 1-\mathbb{P}_{\epsilon_{\eta}}\left( Z<-\bw^{\top}\bz-b\right)\\
&= 1-F\left(-\bw^{\top}\bz-b\right)\\
\end{split}
\end{equation}
Then $F(-\bw^{\top}\bz-b) = 1-p_A \iff \bw^{\top}\bz+b = -F^{-1}(1-p_A)$.
\end{proof}

\section{Statistical correlations between Var, KS and calibration metrics}\label{appendix_correlations_var_ks}

\begin{table*}[t!]
    \caption{\textbf{Pearson's correlation to measure the effect of the Var and KS statistics on ECE, AUROC and Brier score.} When considering all datasets and architectures jointly, both the Var and the KS statistic exhibit statistically significant correlation with the ECE. For, all other cases a more nuanced analysis is required. Specifically for the AUROC and the Brier score, the correlation coefficients $r$ are in the correct direction, however the corresponding $p$-values, while low, are not statistically significant. This hints that we might need more data to assess these cases, or that there is no statistical correlation.}
    \label{table-correlation}
    \centering
    \begin{tabular}{lclllll}
    \toprule
    Model & Statistic & $S$ & ECE & AUROC  & Brier  \\
   \midrule
\multirow{2}{*}{CIFAR-100} & Var & 10 & $r=0.89,p=0.00007$ & $r=-0.26,p=0.39$ & $r=0.19,p=0.53$  \\
 & KS & 10  & $r=0.94,p=0.000002$ & $r=-0.54,p=0.06$ & $r=0.60,p=0.036$ \\
   \midrule
\multirow{2}{*}{CIFAR-10} & Var & 10 & $r=-0.44,p=0.14$ & $r=-0.23,p=0.45$ & $r=-0.53,p=0.07$  \\
 & KS & 10  & $r=-0.14,p=0.66$ & $r=-0.75,p=0.004$ & $r=0.60,p=0.036$ \\
   \midrule
\multirow{2}{*}{Imagenet} & Var & 10 & $r=0.73,p=0.001$ & $r=-0.75,p=0.0007$  & $r=-0.40,p=0.12$  \\
 & KS & 10  & $r=0.55,p=0.02$ & $r=-0.60,p=0.01$ & $r=-0.39,p=0.12$ \\
\midrule
\midrule
\multirow{2}{*}{All datasets} & Var & 10 & $r=0.42,p=0.005$ & $r=-0.45,p=0.003$  & $r=-0.26,p=0.10$  \\
 & KS & 10  & $r=0.46,p=0.002$ & $r=-0.17,p=0.26$ & $r=0.20,p=0.20$ \\
\bottomrule
\end{tabular}
\end{table*}

We conduct experiments to estimate the correlation between the Var and KS statistics and the ECE, AUROC and Brier score metrics. Specifically we estimate Pearson's correlation coefficient along with the associated p-values. We do this first for each dataset individually and then for all datasets jointly. We list these in Table \ref{table-correlation}. When considering all datasets and architectures jointly, both the Var and the KS statistic exhibit statistically significant correlation with the ECE. For, all other cases a more nuanced analysis is required. Specifically for the AUROC and the Brier score, the correlation coefficients $r$ are in the correct direction, however the corresponding $p$-values, while low, are not statistically significant. This hints that we might need more data to assess these cases, or that there is no statistical correlation.

\section{Experimental setup and hyperparameter ranges}\label{appendix_setup_and_hyperparameter_ranges}
All experiments were run on our local cluster, on NVIDIA V100 GPUs. The total computational budget was 1000 GPU hours.

We investigated the following hyperparameter ranges in all experiments.
\begin{itemize}
\item for the hyperparameter $a$ we explored $a=\{0.001,0.005,0.01,0.05,0.1,0.5,1,10,100\}$.
\item For the case of Gaussian perturbations, we investigated $\sigma=\{0.01,0.05,0.1,0.12,0.14,0.16,0.18,0.2\}$.
\item For the case of Elastic deformations we explore $a_{\epsilon}=10,20,50,70$ and $\sigma_{\epsilon}=2,5,10$.
\item For the case of Rotations we explore $\mathrm{degrees}=\{10,20,30,40,50,60\}$.
\item For the case of Affine transformations we explore $\mathrm{degrees}=\{0,10,30\}$, $\mathrm{translation}=\{0,0.1,0.3\}$, $\mathrm{scale}=\{0,0.1,0.3,1\}$.
\end{itemize}

For hyperparameter optimization we performed a grid search over the parameters.

\section{Graphical illustration of the two test statistics (KS,Var)}\label{appendix_graphical_illustration}

\begin{enumerate}
\item Figure \ref{fig:statistics:subfig1}: $F^{-1}(x)$ depends on $\bx$. In the Gaussian model, we assumed that $F^{-1}(x)=\hat{F}^{-1}(x), \; \forall \bx$ . If the noise in latent space is different across data samples, then the estimated confidence levels based on $p_A(\bx)$ are not directly comparable. To quantify this error we can first make an empirical estimate of $F$ for different $\bx$ through sampling and a Kolmogorov-Smirnov test \cite{sprent2007applied}. The empirical $F(x)$ are one-dimensional functions. We can thus use quantiles to estimate the spread of these functions. To summarize the variability of the different $F(x)$ into a single number we create the statistic $\mathrm{Var}=\max_x \vert Q_{2.5}(x)-Q_{97.5}(x) \vert$ where $Q_{2.5}(x)$ is the 2.5th quantile and $Q_{97.5}(x)$ and the 97.5th quantile. Intuitively this is the ``maximum spread" of the empirical cumulatives along the $y$-axis.

\item Figure \ref{fig:statistics:subfig2}: Given a model cumulative $\hat{F}^{-1}(x)$ and the true cumulative $F^{-1}(x)$ we might have the wrong model for latent noise, $F^{-1}(x)\neq \hat{F}^{-1}(x)$. It might be that $F^{-1}(x)$ is different based on the input transformations and provides uncalibrated estimates of $p(A\vert \bx, f)$ for some of these transformation distributions. We can use a Kolmogorov-Smirnov test to estimate $F^{-1}(x)$ empirically and compare it with $\hat{F}^{-1}(x)$. We denote the Kolmogorov-Smirnov statistic $\sup_x \vert F^{-1}(x)-\hat{F}^{-1}(x) \vert$ as KS.
\end{enumerate} 

\begin{figure*}[t!]
    \centering
        \centering
        \includegraphics[width=.7\textwidth]{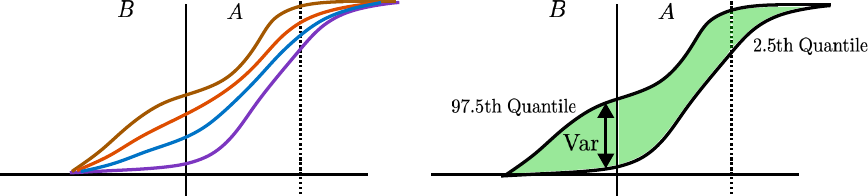}
        \caption{(a) Multiple cumulatives \hspace{3cm} (b) Var statistic}
        \label{fig:statistics:subfig1}

        \centering
        \includegraphics[width=.7\textwidth]{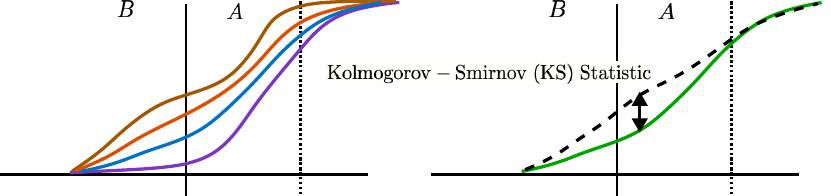}
        \caption{(a) Multiple cumulatives \hspace{3cm} (b) KS statistic}
        \label{fig:statistics:subfig2}
    \caption{\textbf{A graphical explanation of the KS and Var statistics.} Subfigure \ref{fig:statistics:subfig1}: The Var statistic quantifies the spread of 95\% of the mass of the different cumulatives across the $y$-axis. Subfigure \ref{fig:statistics:subfig2}: The KS statistic quanitfies the maximum absolute difference between our model cumulative and the empirical one.}
    \label{fig:statistics}
\end{figure*}

\end{document}